\newcommand\ER{\mathcal{G}}
\newcommand{\donya}[1]{\textcolor{violet}{\textbf{[Donya:} #1\textbf{]}}}
\newcommand{\cyrus}[1]{\textcolor{blue}{\textbf{[Cyrus:} #1\textbf{]}}}
\newcommand{\donyaa}[1]{\textcolor{teal}{\textbf{[clean:} #1\textbf{]}}} 
\newcommand{\daniel}[1]{\textcolor{red}{\textbf{[Daniel:} #1\textbf{]}}}
\renewcommand{\donyaa}[1]{}
\renewcommand{\donya}[1]{}
\renewcommand{\cyrus}[1]{}
\renewcommand{\daniel}[1]{}
\DeclareFontFamily{U}{mathx}{\hyphenchar\font45}
\DeclareFontShape{U}{mathx}{m}{n}{
      <5> <6> <7> <8> <9> <10>
      <10.95> <12> <14.4> <17.28> <20.74> <24.88>
      mathx10
      }{}
\DeclareSymbolFont{mathx}{U}{mathx}{m}{n}
\DeclareMathSymbol{\bigtimes}{1}{mathx}{"91}
\definecolor{DarkRed}{rgb}{0.1,0.1,0.8}
\definecolor{DarkBlue}{rgb}{0.1,0.1,0.5}
\definecolor{ForestGreen}{rgb}{0.1333,0.5451,0.1333}
\definecolor{DarkRed}{rgb}{0.8,0,0.4}
\definecolor{Red}{rgb}{0.8,0,0.4}
\crefname{property}{property}{Property}
\crefname{equation}{eq}{Eq}
\def\BState{\State\hskip-\ALG@thistlm}
\newtheorem{lemma}{Lemma}[section]
\newtheorem{proposition}[lemma]{Proposition}
\newtheorem{theorem}[lemma]{Theorem}
\newtheorem{corollary}[lemma]{Corollary}
\newtheorem{claim}[lemma]{Claim}
\newtheorem{definition}[lemma]{Definition}
\newtheorem*{claim*}{Claim}
\newtheorem*{proposition*}{Proposition}
\newtheorem*{lemma*}{Lemma}
\newtheorem*{problem*}{Problem}
\crefname{lemma}{Lemma}{Lemmas}
\crefname{note}{Note}{Notes}
\crefname{claim}{Claim}{Claims}
\newtheorem{remark}[lemma]{Remark}
\newtheorem{mdresult}{Result}
\newtheoremstyle{restate}{}{}{\itshape}{}{\bfseries}{~(restated).}{.5em}{\thmnote{#3}}
\theoremstyle{restate}
\theoremstyle{definition}
\newtheorem{mdalg}{Algorithm}
\renewcommand{\qed}{\nobreak \ifvmode \relax \else
      \ifdim\lastskip<1.5em \hskip-\lastskip
      \hskip1.5em plus0em minus0.5em \fi \nobreak
      \vrule height0.75em width0.5em depth0.25em\fi}
\DeclarePairedDelimiterXPP\ind[1]{\mathds{1}}{\lbrace}{\rbrace}{}{#1} 
\DeclarePairedDelimiterX\eval[1]{\lbrace}{\rvert}{#1 \delimsize\rbrace} 
\DeclarePairedDelimiter\abs{\lvert}{\rvert} 
\DeclarePairedDelimiter\card{\lvert}{\rvert} 
\DeclarePairedDelimiter\norm{\lVert}{\rVert} 
\DeclarePairedDelimiter\ceil{\lceil}{\rceil} 
\DeclarePairedDelimiter\del{\lparen}{\rparen} 
\DeclarePairedDelimiter\set{\lbrace}{\rbrace} 
\DeclarePairedDelimiter\intoo{\lparen}{\rparen} 
\DeclarePairedDelimiter\intcc{\lbrack}{\rbrack} 
\DeclarePairedDelimiterX\Set[1]\lbrace\rbrace{%
  
  #1
}
\DeclarePairedDelimiterX\Sbr[1]\lbrack\rbrack{%
  
  #1
}
\DeclarePairedDelimiterX\Del[1]\lparen\rparen{%
  
  #1
}
\newcommand\T{{\scriptscriptstyle{\mathsf{T}}}} 
\def\ddefloop#1{\ifx\ddefloop#1\else\ddef{#1}\expandafter\ddefloop\fi}
\def\ddef#1{\expandafter\def\csname bf#1\endcsname{\ensuremath{\mathbf{#1}}}}
\def\ddef#1{\expandafter\def\csname bf#1\endcsname{\ensuremath{\boldsymbol{\csname #1\endcsname}}}}
\def\ddef#1{\expandafter\def\csname cal#1\endcsname{\ensuremath{\mathcal{#1}}}}
\DeclareMathOperator\sign{sign} 
\newcommand\R{\mathbb{R}} 
\DeclareMathOperator*{\Prob}{\ensuremath{\mathbb{P}}}
\renewcommand{\Pr}{\Prob}
\newenvironment{tbox}{\begin{tcolorbox}[
		enlarge top by=5pt,
		enlarge bottom by=5pt,
		 breakable,
		 boxsep=0pt,
                  left=4pt,
                  right=4pt,
                  top=10pt,
                  arc=0pt,
                  boxrule=1pt,toprule=1pt,
                  colback=white
                  ]
	}
{\end{tcolorbox}}
\newcommand{\II}{\ensuremath{\mathbb{I}}}
\newcommand{\mireal}[1][]{
  \ifx\relax#1\relax%
    \II(\mione \,; \mitwo)%
  \else%
    \II(\mione \,; \mitwo\mid #1)%
  \fi
}
\DeclareMathOperator\E{\mathbb{E}} 
\title{\vspace{-1cm}\textbf{Prior Knowledge Makes It Possible: From Sublinear Graph Algorithms to LLM Test-Time Methods}}
\author{
{\bf Avrim Blum}\thanks{Toyota Technological Institute at Chicago. \texttt{avrim@ttic.edu}} \quad
{\bf Daniel Hsu}\thanks{Columbia University. \texttt{djhsu@cs.columbia.edu}} \quad
{\bf Cyrus Rashtchian}\thanks{Google Research. \texttt{cyroid@google.com}} \quad
{\bf Donya Saless}\thanks{Toyota Technological Institute at Chicago. \texttt{donya@ttic.edu}}
}
\date{\today}
\begin{document}
\vspace{-3cm}
\maketitle
\vspace{-0.5cm}
{\renewcommand\thefootnote{}\footnotetext{Authors listed in alphabetical order.}%
}
\begin{abstract}
    Test-time augmentation, such as Retrieval-Augmented Generation (RAG) or tool use, critically depends on an interplay between a model's parametric knowledge and externally retrieved information. However, the theoretical underpinnings of this relationship remain poorly understood. Specifically, it is not clear how much pre-training knowledge is required to answer queries with a small number of augmentation steps, which is a desirable property in practice. To address this question, we formulate multi-step reasoning as an $s$-$t$ connectivity problem on a knowledge graph. We represent a model's pre-training parametric knowledge as a partial, potentially noisy subgraph.  We view augmentation as querying an oracle for true edges that augment the model's knowledge. Then, we characterize the necessary and sufficient number of augmentation steps for the model to generate an accurate answer given partial prior knowledge. One key result shows a phase transition: if the prior knowledge graph over $n$ vertices is disconnected into small components, then finding a path via augmentation is inefficient and requires $\Omega(\sqrt{n})$ queries. On the other hand, once the density of correct knowledge surpasses a threshold, forming a giant component, we can find paths with an expected constant number of queries.
\end{abstract}
\section{Introduction}

Generating accurate and helpful answers with Large Language Models (LLMs) often involves a combination of \emph{thinking} about the user query and \emph{retrieving} relevant information from external sources. For reasoning problems, the LLM can produce a chain-of-thought, analyzing intermediate sub-problems before arriving at a final solution~\cite{comanici2025gemini, mirtaheri2025let,yang2025qwen3,deepseekai2025deepseekr1incentivizingreasoningcapability}. For information seeking queries, the LLM can retrieve from databases or knowledge graphs to expand its grasp of new facts~\citep{gutierrez2025rag, lewis2020retrieval, min2019knowledge, zhou2025depth, vu2023freshllms}. A common thread for these scenarios is that the LLM needs to augment its pre-training knowledge with additional information (either self-generated or external) before being able to adequately answer a question~\citep{joren2024sufficient,su2024bright, wei2024measuring}. However, this interplay between parametric and user-provided or externally-retrieved contextual knowledge remains poorly understood.

We develop a graph-theoretic framework to study the ability of LLMs to solve multi-step problems. Using our abstract model, we can shed light on some fundamental questions. For instance, we explore how properties of the pre-training knowledge can facilitate or impede the ability to solve a multi-step problem. Then, as one of our results we show that when prior quality is above a certain threshold, a constant expected number of augmentation steps suffices, and below this threshold, any strategy requires superconstantly many queries using external information. Of course, representing an LLM's vast, nuanced parametric knowledge as an unweighted subgraph is a significant simplification. Nonetheless, our model provides a formal lens on a key principle: we support the conjecture that efficient retrieval-augmented generation (RAG) requires a richness of parametric knowledge~\citep{guu2020retrieval, pan2024unifying, wei2024long, xie2023adaptive, liu2025understanding}. In other words, if LLMs are to succeed in RAG tasks, then they need to both process language \emph{and} have a sufficient density of world knowledge from their pre-training data. Similarly, we provide further evidence that the best reasoning models require a deep knowledge of mathematical facts~\citep{ma2025benchmarking}. 

\begin{figure*}[ht]
\begin{center}
\centerline{\includegraphics[width=0.8\textwidth]{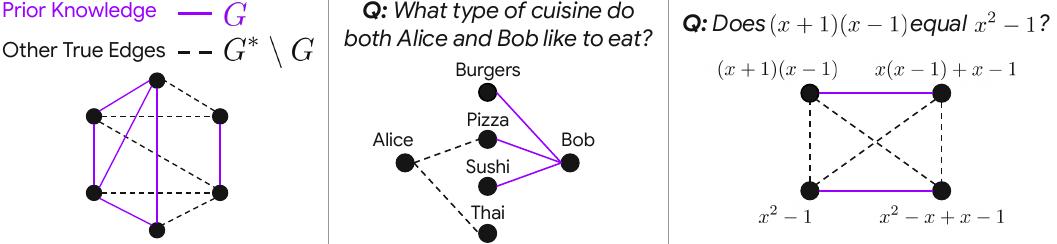}}
\caption{The left graph depicts that in our basic framework, we have a prior graph $G$, which is a subset of the true graph $G^*$. We illustrate two example knowledge graphs, where finding a path between nodes can answer the given question. Often the algorithm must query dotted edges from $G^* \setminus G$ that are outside of $G$. We also consider various ways to sample $G$ and $G^*$, as well as cases where edges in $G$ may be noisy and need verification.\vspace{-2em} }
\label{figure:teaser}
\end{center}
\end{figure*}
\subsection{Graph-theoretic Framework for Solving Multi-step Problems}
Given that an LLM with test-time augmentation is a very complex system to study, we focus on a few key components. At a high level, we consider a knowledge graph $G^*$ where nodes are entities in the world and edges represent relationships or facts that involve two entities. For example, consider a simple fact composition scenario in knowledge graphs: if we know ‘‘A is connected to B'' and ‘‘B is connected to C,'' we can deduce ‘‘A is connected to C.'' Such chaining of local facts into a global conclusion lies at the heart of deductive reasoning. One interpretation is that ``A is connected to B'' corresponds to ``A is equivalent to B'' and we want to deduce other equivalence relations. Another is that a connection corresponds to a co-occurrence in a sentence~\citep{yang2024synthetic}. We provide further examples in \Cref{figure:teaser}.

A core component of our framework is that we aim to formalize the difference between the partial knowledge that the model knows from pre-training and the potential facts that it could know through reasoning-based thinking or external retrieval mechanisms. We abstract this as designating a subgraph $G$ of $G^*$. In graph terms, we frame ``good'' prior knowledge as exhibiting well-connectedness, expansion, and edge reliability.
We use $G$ to capture the fact that a pre-tained LLM will have only partial information about an unknown ground-truth graph $G^*$. 
Furthermore, each edge of $G$ may or may not correspond to a true edge in $G^*$, and the number of edges in $G$ will be a small fraction of the total in $G^*$. We want to find trade-offs where a model knows $G$ but also requires some access to $G^*$ to correctly answer a question. The model will access $G^*$ using ``oracles'' that provide information. From a theoretical point of view, our work  complements much literature on sub-linear time graph algorithms~\citep{goldreich2011algorithmic}. In particular, the algorithm starts with the prior knowledge of $G$, rather than with no information about $G^*$, leading to a new twist on classical problems.

To study test-time augmentation, we define two query models: (i) given a vertex $v$, the \emph{retrieval oracle} returns a random neighbor of $v$ in $G^*$ (and our lower bound also holds for a stronger oracle that given a vertex $v$, returns \emph{all} neighbors in $G^*$ of the connected component in $G$ that contains $v$) and (ii) given a pair $u,v$, the \emph{verifier oracle} returns whether $\{u,v\}$ is an edge in $G^*$ or not. These oracles capture different aspects of test-time augmentations based on a knowledge graph. In a RAG setting, a retrieval mechanism provides information about the query. Often this is done through a similarity search (e.g., BM25 or embeddings) based on the entities in the query. However, we cannot guarantee exactly what information the retriever returns. Hence, in the retrieval oracle, we get a random neighbor entity of a vertex.
The connected component oracle is stronger, returning many neighbors. The verifier oracle offers the option to query a specific edge, rather than a random one. 

With these query models, we then study when it is possible to efficiently solve certain tasks. We are particularly interested in algorithms that use a constant number of queries, not scaling with graph size. We begin our analysis with the $s$–$t$ path problem. For an input pair of vertices $(s, t)$ in a hidden ground truth graph $G^*$, the goal is to output a path between $s$ and $t$ in $G^*$ if there is one. The path finding task is a proxy for a fundamental aspect of deductive reasoning: composing local facts (edges) to infer a global conclusion (path connectivity). For example, each edge may represent a discrete factual or logical step (e.g., `Alice likes Pizza' or `Pizza is an Italian food'). Finding a path from `Alice' to `Italian food' means composing individual facts to infer a new relationship. 
We also consider a robust version, where we output a subgraph that connects $s$ and $t$ even after removing certain edges.

\subsection{Main Results}
\begin{table*}[h!]
\centering
\caption{Algorithm Performance and Lower Bounds. Here a ``\checkmark'' for \emph{Grounded} means we provide a grounded algorithm (\Cref{def:GroundedAlgorithm}), and an ``$\times$'' means our lower bound holds for general algorithms. For brevity we only state results for  the retrieval oracle (\Cref{def:RetrievalOracle}) in this table, with results for other oracles in the paper.}
\label{tab:algo_results}
\begin{tabular}{lllcll}
\toprule
\textbf{Problem} & \textbf{Prior $G$} & \textbf{True $G^*$} & \textbf{Grounded} & \textbf{Results} & \textbf{Reference} \\
\midrule
\multirow{2}{*}{$s$–$t$ path} & Random dense subgraph & Erd\H{o}s–R\'enyi & \checkmark & $O(1)$ & Thm.~\ref{theorem:admissibleErods} \\
                           & Random sparse subgraph & Erd\H{o}s–R\'enyi & $\times$     & $\Omega(\sqrt{n})$ & Thm.~\ref{theorem:supernodelowerbound} \\
\midrule
$s$–$t$ path & Double Star & + Random Bridge & $\times$ & $\Theta(n)$ & Prop.~\ref{proposition:RandomizedAdaptive} \\
\midrule
$s$–$t$ path & Empty & Complete graph & \checkmark & $\Theta(\sqrt{n})$ & Prop.~\ref{Proposition:birthdayparadox}\\
\midrule
Int.~$K$-connected & Random dense subgraph & Erd\H{o}s–R\'enyi & \checkmark & $O(\log K)$ & Thm.~\ref{thm:disjoinpaths} \\
\bottomrule
\end{tabular}
\end{table*}

We prove several new bounds, for multiple query models and algorithmic tasks. \Cref{tab:algo_results} summarizes our main results. Our contributions make progress toward the larger goal of characterizing when an AI system, along with test-time mechanisms, can solve reasoning tasks from partial, and possibly noisy, prior knowledge.
We first introduce a graph-theoretic property, \emph{Retrieval Friendliness} (\Cref{def:RetrievalFriendliness}), that captures when a partial knowledge graph and its ground truth counterpart admit efficient reasoning about every $s$–$t$ connectivity prompt using a constant expected number of retrieval queries. 
We then define a general property, admissible, that implies Retrieval Friendliness. 
Building on these concepts, we show that variations of a bidirectional search algorithm can identify different types of subgraphs in $G^*$ while using few queries.

To instantiate our theory in a more concrete setting, we show that certain random graphs are admissible with high probability.
The Erd\H{o}s–R\'enyi graph model serves as a good testbed for reasoning from incomplete priors due to its homogeneity, connectedness, strong expansion, and small diameter. These are all characteristics that intuitively facilitate finding paths. 
In other words, if a constant-query algorithm already fails here, then we would need stronger assumptions on the prior knowledge (e.g., more true information or a property where connectivity is tied to locality).   

Our main technical results consist of new, nearly-tight lower bounds, which apply to multiple oracle models. First, we focus on the path problem with a random neighbor oracle, showing that can be hard to even find an $s$–$t$ path without sufficient prior knowledge.  We consider both worst-case and random graphs. Starting simple, we show that if $G$ misses a single ``bridge'' edge, then we  need $\Omega(n)$ queries.  We next analyze Erd\H{o}s–R\'enyi graphs, considering when the prior knowledge is \emph{random} as opposed to structured, where we get an $\Omega(\sqrt{n})$ lower bound.
Finally, we consider queries that return multiple neighbors, other tasks (e.g., multi-vertex connectivity), and robustness constraints.
We analyze the reliability of prior knowledge, quantifying how many ``false facts'' (incorrect edges) we can tolerate while verifying paths. 
This highlights the importance of verifying the model’s intermediate reasoning steps before relying on further retrieval. 

One salient aspect of our new lower bounds is that algorithms need many queries \emph{even when the expected path length is short}. It would be easy to prove that the number of queries grows with the path length. We go further, showing cases where the algorithm must explore many options, and the  difficulty comes from \emph{finding} a path. This is more interesting because LLMs often answer queries that only require a few hops.

\subsection{Related Work}
Retrieval-Augmented Generation (RAG) enhances LLMs by allowing them to access external knowledge bases. While RAG is effective in practice, the theoretical modeling of RAG is limited  \citep{koga2025privacypreservingretrievalaugmentedgenerationdifferential, weller2025theoretical},
and
the interplay between a model's existing knowledge and the information it retrieves
is not well understood. 
Classic RAG uses a single retrieve then generate step, which is often insufficient for multi-hop or evolving information needs. \emph{Dynamic RAG} interleaves generation with retrieval, deciding both \emph{when} and \emph{what} to retrieve \citep{DynamicRAG2025, asai2023self, gao2022rarr}. Corrective variants add evaluators to re-query when retrieval looks untrustworthy \citep{CRAG2024}. Our model complements these systems by replacing heuristic trigger policies with \emph{bounded query guarantees}: under structural conditions on the target knowledge graph, we characterize when constant expected retrieval suffice and when no bounded policy can succeed. Related instance-level criteria such as \emph{sufficient context} \citep{joren2024sufficient} evaluate whether the retrieved snippets alone contain a solution; our concept of \emph{retrieval friendliness} strengthens this by demanding constant-query, zero-error guarantees while considering the effect of prior knowledge. Our lower bounds provide more evidence for the theoretical limitations of embedding-based retrieval \citep{weller2025theoretical}.

Some of our results are inspired by 
a process-based supervision model \citep{uesato2022solvingmathwordproblems, lightman2023letsverifystepstep, setlur2024rewardingprogressscalingautomated, rohatgi2025tamingimperfectprocessverifiers, balcan2025learning}. Unlike outcome-only feedback, which evaluates a complete solution, process-based supervision provides granular feedback on each intermediate step. For graph problems, this corresponds to validating edges.
We model the step-level validation capability with a verifier oracle, which is a membership query \citep{10.1023/A:1022821128753} on the edge set of $G^*$. Graph-structured training and tool use have improved relational reasoning with LLMs \citep{mirtaheri2025let, yao2023treethoughtsdeliberateproblem, shalevshwartz2025reasoningsuperintelligencesearchtheoreticperspective, HRL22,huang2023prodigyenablingincontextlearning,wu2024avataroptimizingllmagents, kim2025metastabledynamicschainofthoughtreasoning}, and while synthetic continued pre-training \citep{yang2024synthetic} can strengthen the connectedness of parametric knowledge, our results clarify when prior knowledge can improve test-time retrieval efficiency.

Our work connects to a long line of literature on sublinear graph algorithms in query models; see \cite{beame2020edge, feige2004sums, feige2021tight, racz2019finding, rashtchian2020vector, rashtchian2021average, chen2020nearly} and references therein.  We utilize recent lower bounds on shortest paths in expanders and random graphs~\citep{alon2023sublinear}, where that research provides a foundation for studying path computations in large networks~\citep{basu2025sublinear}.  Our work is also related to the growing body of literature on {\em data-driven algorithm design} (e.g., \cite{balcan2017learning, balcan2020data, balcan2024much}) in that our graph $G$ can be viewed as a data sample from an underlying population $G^*$, and we are asking the question of how much data is sufficient to substantially improve the (query) complexity of a path-finding algorithm. Finally, our results imply lower bounds on CoT length in a reasoning-inspired model~\citep{mirtaheri2025let}.
\raggedbottom

\section{Preliminaries}
\label{sec:preliminaries}
For integer $n\in\mathbb{N}$, define $[n]=\{1,\dots,n\}$. Let $G^*=(V,E^*)$ be the ground truth graph on $V=[n]$. We use $N_{G}(u)$ for the neighbors of $u$ in a graph $G$, and let $\deg_G(u)=|N_G(u)|$. An \emph{$s$–$t$ path} is a simple path $P=(v_0=s, v_1,\ldots,v_k=t)$ consisting of distinct pairs $(v_i,v_{i+1})\in E^*$.
The algorithm has access to a prior graph $G=(V,E)$. To isolate the challenge of knowledge incompleteness, we begin by assuming the prior is reliable; that is, $G$ is a subgraph of $G^*$ with $E \subseteq E^*$. This `clean prior' setting allows us to first study knowledge structure, before we later discuss extensions to unreliable or ‘hallucinated' facts (incorrect edges). The algorithm can use one or more oracles to $G^*$, which serve as test-time augmentation methods.  In addition to paths, we will also be interested in ``robust'' subgraphs.
For $K\geq 1$, we say that $P$ is an \emph{internally $K$-connected} subgraph between $s$ and $t$ if $s$ and $t$ remain connected in $P$ whenever fewer than $K$ edges are removed from $P \cap G$.

\subsection{Retrieving Relevant Knowledge}
We start with our first query model. It is the most restrictive, but it will suffice for our algorithms. RAG systems provide relevant retrieval results through a variety of search methods. To abstract away their inner workings, we consider a basic \emph{Retrieval Oracle} that, given a vertex $u$, returns one of its true neighbors in $G^*$ chosen uniformly at random.

\begin{definition}[Retrieval Oracle]
    Let $G^*=(V,E^*)$ be a ground-truth graph. The retrieval oracle $\mathcal{O}_{G^*}:V\to E^*\cup\{\bot\}$ is specified by the family $\{\mu_u^{G^*}\}_{u\in V}$ where each $\mu_u^{G^*}$ is the uniform probability distribution over neighbors of $u$ in $G^*$. On query $u\in V$ the oracle returns
    \[
    \mathcal{O}_{G^*}(u)=
    \begin{cases}
    (u,v) & \text{with probability }\mu_u^{G^*}(v),\ v\in N_{G^*}(u),\\
    \bot  & \text{if } N_{G^*}(u)=\emptyset.
    \end{cases}
    \]
\label{def:RetrievalOracle}
\end{definition}
While real-world RAG systems use deterministic similarity searches, the retrieved results are imperfect proxies for true relevance. Modeling the output as stochastic is a tractable way to capture the uncertainty an algorithm faces, preventing it from exploiting an all-knowing retriever.
Later, we prove a lower bound for a stronger oracle that returns many neighbors based on the connected component containing the query.

We are interested in the interplay between the pretrained knowledge and the feasibility of outputting a correct answer. From an efficiency point of view, we also want to determine when a model can use a small number of retrieval queries in expectation. This is in contrast to cases where the model must make a number of queries that scales with the graph size, which would be infeasible in practice for large graphs. Our framework captures the fact that the the model often combines the knowledge learned through context with its own prior knowledge for answer generation.

We introduce the notions of \emph{Grounded Algorithms} and \emph{Retrieval Friendliness}. First, the distinction between grounded and general algorithms is crucial. A grounded algorithm should not ``hallucinate'' or guess connections; its reasoning is based on verified facts (e.g., via $G$ or an oracle). This captures how RAG systems  ideally work, where grounding refers to providing citations~\citep{gao2023enabling, songmeasuring}. Our lower bounds that hold for general algorithms (marked with an ``$\times$'' in \Cref{tab:algo_results}) are stronger and apply to hypothetical algorithms with the ability to guess edges.

\begin{definition}[Grounded Algorithm]
An algorithm $\mathcal{A}$ is \emph{grounded} if it outputs only edges it has explicitly observed from oracle outputs or prior knowledge $G$.
\label{def:GroundedAlgorithm}
\end{definition}

Combined with grounded algorithms, our next definition strengthens the notion of ``sufficient context'' of~\citet{joren2024sufficient}. 
Sufficient context means that the LLM has enough information to answer a query. We go further, saying that the algorithm can make a conclusion after a constant number of queries in expectation, and it only uses explicitly observed edges.
\begin{definition}[$q$-Retrieval Friendliness]\label{def:RF}
     The pair $(G,G^*)$ is \emph{$q$-retrieval friendly} for a grounded algorithm $\mathcal{A}$  if given $s,t$, access to $G$, and $\mathcal{O}_{G^*}$, the algorithm outputs:
    (a) $\textsc{no}$ when $t$ is not reachable from $s$ in $G^*$, and
    (b) when $t$ is reachable, a simple $s$-$t$ path $P$ such that all edges of $P$ are valid (they are also in $G^*$) by making at most $q$ queries in expectation. 
\label{def:RetrievalFriendliness}
\end{definition} 
Intuitively, Retrieval Friendliness implies that even though $G$ may contain incomplete information about the true graph $G^*$, it is still possible to efficiently recover valid reachability information for every pair in $G^*$ using only a constant number of retrieval queries in expectation. 
\subsection{Random Graphs \& Asymptotic Notation}
Our general framework applies to any way of constructing a pretrained knowledge graph $G$ and a target graph $G^*$. In some cases, we analyze a standard random graph model. Let  $\ER(n,p)$ denote the Erd\H{o}s–R\'enyi random graph with $n$ vertices, where edges appear independently with probability $p$ (which may depend on $n$, so we may write $p(n)$ for clarity).
This model produces a high-entropy graph, with no correlation between edges, and provides a challenging regime for finding paths.
All our asymptotics are as \( n \to \infty \). An event $B$ occurs \emph{with high probability} if \( \lim_{n \to \infty}  \mathbb{P}[B] \to 1\). 
Unless stated otherwise, success probabilities are over randomness in $G$, $G^*$, the oracle, and any internal randomness in the algorithm. 
\section{Lower Bounds}
We establish limits of test-time augmentation by proving query complexity lower bounds. We begin with an  adversarial ``bridge'' graph to show that even a single missing piece of information can be expensive to find. We then show that without any prior knowledge, grounded algorithms are inefficient even on well-connected graphs. Finally, we prove our main lower bound for the more complex setting of random graphs.

\textbf{Bridge Graph Lower Bound.} While dense priors can permit efficient retrieval, we now demonstrate that sheer volume of pretrained knowledge by itself is insufficient. 
To illustrate this, we construct a worst-case instance where the prior $G$ is a subgraph of the target $G^*$ containing all but a single edge
forming an information bottleneck (see Figure~\ref{figure:worstcase}). Formally, define the \textbf{double star with random bridge} on $n$ vertices as a graph $G^*=(V,E^*)$ constructed as follows: $V$ is partitioned into $S$ and $T$ of size $n/2$. Then, $E^*$ contains the edges of two stars centered at designated vertices $c_s \in S$ and $c_t \in T$, plus a single bridge edge $(u,v)$ where $u \in S\setminus\{c_s\}$ and $v \in T\setminus\{c_t\}$ are chosen uniformly at random. The learner knows the prior graph $G:=G^*\setminus\{(u,v)\}$. Let $(s,t)$ be a pair of vertices chosen uniformly at random from $V$. This  example demonstrates a lower bound in an extreme case. 

\begin{figure}[ht]
\begin{center}
\centerline{\includegraphics[width=0.25\columnwidth]{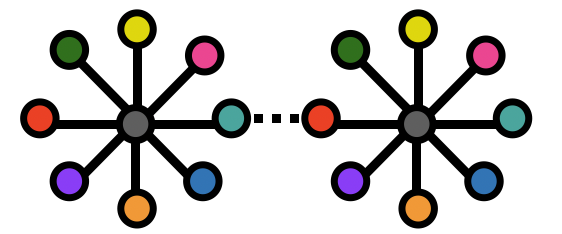}}
\caption{Double star with random bridge.  The prior knowledge graph $G$ consists of two disjoint star graphs on the left and right. The ground-truth graph $G^*$ adds a single, hidden ``bridge'' edge between random leaves on the left and right. Any algorithm must query $\Omega(n)$ leaves on average to find this bottleneck edge.
}
\label{figure:worstcase}
\end{center}
\vskip -0.3in
\end{figure}
\begin{proposition}
    Finding an $s$-$t$ path in the double star with random bridge on $n$ vertices requires $\Omega(n)$ retrieval queries to have success probability $\geq 2/3$. \label{proposition:RandomizedAdaptive}
\end{proposition}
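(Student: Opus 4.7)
The plan is to condition on the event $E$ that $s$ and $t$ lie on opposite sides of the partition $S \cup T$, which occurs with probability exactly $1/2$ over the uniform choice of $(s,t)$. Under $E$, the unique $s$-$t$ path in $G^*$ must traverse the bridge edge $(u,v)$, so any valid output contains this edge. Since on $\bar{E}$ the algorithm succeeds trivially using $G$ alone (via a path of length at most two through the appropriate star center), and the overall success probability must be at least $2/3$, the algorithm must succeed conditional on $E$ with probability at least $1/3$. Moreover, any non-grounded algorithm that simply guesses $(u,v)$ succeeds with probability at most $O(1/n^2)$, since $(u,v)$ is uniform over $(n/2-1)^2$ possibilities and the prior $G$ looks identical across all such choices. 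So it suffices to bound the probability that the retrieval oracle reveals the bridge within $q$ queries.

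The next step is a structural observation: only a query to $u$ or $v$ can reveal the bridge. A query to any non-bridge leaf $w$ deterministically returns its star center; a query to $c_s$ returns a uniformly random leaf of $S \setminus \{c_s\}$, whose identity is independent of $(u,v)$ (and symmetrically on the $T$-side); but a query to $u$ returns $v$ with probability exactly $1/2$, and symmetrically for $v$. Consequently, until the bridge has been observed, every oracle response has a distribution independent of $(u,v)$. I would formalize this by induction on the depth of the algorithm's decision tree, concluding that the query sequence $x_1, \ldots, x_q$ issued along the ``bridge-not-yet-found'' branch is determined independently of $(u,v)$.

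Given this independence, the remainder is a counting bound. For any $w \in S \setminus \{c_s\}$ with $n(w) := |\{i : x_i = w\}|$, the probability of observing the bridge conditional on $u = w$ is $1 - 2^{-n(w)} \leq n(w)$; averaging over the uniform prior on $u$ over the $N = n/2 - 1$ leaves and using $\sum_w n(w) \leq q$ yields $\Pr[\text{bridge revealed via } u\text{-query}] \leq q/N$, and the symmetric bound holds for $v$-queries. Combining gives $\Pr[\text{success} \mid E] \leq O(q/n) + O(1/n^2)$, so requiring this to be at least $1/3$ forces $q = \Omega(n)$. The main obstacle is the adaptive-independence step in the second paragraph, which requires care to formalize precisely but is intuitively clear: since all ``uninformative'' responses look distributionally identical regardless of where the bridge lies, no adaptive strategy can use them to concentrate queries on the bridge endpoints.
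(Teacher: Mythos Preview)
Your proposal is correct and shares the paper's high-level decomposition (condition on $s,t$ lying in different stars; reduce to locating the hidden bridge), but the technical core differs. The paper invokes Yao's principle and then switches to a strictly stronger oracle---one that is prior-aware and never repeats edges---under which querying a non-bridge leaf returns $\bot$ while querying the bridge leaf deterministically returns the bridge edge; this collapses the problem to a pure search for one hidden item among $n/2-1$, giving success probability $q/(n/2-1)$ directly (and a lower bound against the stronger oracle implies one against the basic oracle). You instead stay with the basic retrieval oracle of \Cref{def:RetrievalOracle}, where querying the bridge leaf $u$ reveals the bridge only with probability $1/2$, and compensate with a reference-run/coupling argument plus an explicit treatment of the non-grounded guessing case. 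The paper's route is shorter once the stronger oracle is granted; yours is self-contained for the oracle actually named in the proposition and handles non-grounded outputs more explicitly than the paper's proof does.

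One precision point worth tightening: the sentence ``until the bridge has been observed, every oracle response has a distribution independent of $(u,v)$'' is not literally true---the response distribution when querying a leaf $w$ \emph{does} depend on whether $w\in\{u,v\}$. What your coupling actually delivers is that the transcript, on the event of non-divergence, coincides with that of a null run in which no bridge exists, and this null run is independent of $(u,v)$. Correspondingly, the $O(1/n^2)$ guessing term should be stated as the \emph{joint} probability $\Pr[\text{no divergence}\wedge\text{correct guess}]\le 1/N^2$ (which follows because the null-run output is independent of $(u,v)$), not as a conditional probability: the posterior on $(u,v)$ given non-divergence is in general non-uniform (leaves queried more often become less likely to be bridge endpoints), but the joint bound is what your final inequality actually needs.
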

Intuitively, the algorithm must query vertices until it finds the bridge. However, each query reveals no extra information about the potential bridge endpoints, since the retrieval oracle returns a random neighbor. We provide the full proof in \Cref{appendix:RandomizedAdaptive}.  Note that the proof holds with access to a stronger retrieval oracle that treats every edge in the prior graph as already known and never repeats an edge it has either previously returned or that was present in the prior.

\textbf{Complete Graph, Grounded Lower Bound.} We next establish that in a favorable setting, constant query \emph{grounded} retrieval is impossible without prior knowledge. We consider when the ground truth graph $G^*$
is a unweighted complete graph, and the pretrained graph $G$ has no edges. This is a worst case scenario for the learner in terms of prior information, but best-case in terms of the target graph's connectivity.

\begin{proposition}
    Let $G^*$ be complete and $G$ empty. For any nodes 
    $s$ and $t$, any grounded algorithm must make $\Omega(\sqrt{n})$ retrieval oracle (\Cref{def:RetrievalOracle}) queries to find an $s$-$t$ path with success probability at least $1/2$.
    \label{Proposition:birthdayparadox}
\end{proposition}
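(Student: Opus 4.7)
The plan is to reduce the success event to a birthday-paradox collision on the random connected components of the observed subgraph. Since the algorithm is grounded and the prior $G$ is empty, every edge of its output $s$–$t$ path must have been returned by the retrieval oracle. Hence success implies that $s$ and $t$ lie in the same connected component of the random subgraph $H_q$ formed from the $q$ oracle responses. I will upper bound the probability of this event by $O(q^2/n)$, which forces $q = \Omega(\sqrt{n})$ whenever the success probability is at least $1/2$.

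To carry out the bound, I track the random components $C_s(i)$ and $C_t(i)$ of $H_i$ containing $s$ and $t$ after $i$ queries. A deterministic invariant is $|C_s(i)|, |C_t(i)| \leq i+1$: any connected subgraph on $k$ vertices uses at least $k-1$ edges, and $H_i$ has at most $i$ edges in total. Because $G^*$ is complete, the retrieval oracle's response $v_{i+1}$ is uniformly distributed over $V \setminus \{u_{i+1}\}$ conditional on the history and on the algorithm's choice $u_{i+1}$.

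Conditional on $C_s$ and $C_t$ not having merged by step $i$, step $i+1$ can merge them only if $u_{i+1}$ lies in one of $C_s(i), C_t(i)$ and $v_{i+1}$ lies in the other, since a single new edge has only two endpoints. In either such subcase the conditional merge probability is at most $(i+1)/(n-1)$, so a union bound gives overall success probability at most $\sum_{i=0}^{q-1} (i+1)/(n-1) = q(q+1)/(2(n-1))$. Setting this below $1/2$ yields $q = \Omega(\sqrt{n})$. The step I expect to verify most carefully is the ``indirect merge'' case: a query from a third-party vertex $w \notin C_s(i) \cup C_t(i)$ can at most grow one of $C_s, C_t$ (if the random response happens to land in it) but cannot itself merge the two, which rules out any merging channel beyond the two subcases above. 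Randomization in the algorithm is handled by Yao's principle, since the per-step bound holds uniformly over deterministic query strategies.
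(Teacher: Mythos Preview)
Your argument is correct. Both your proof and the paper's rely on the birthday-paradox intuition, but the organization differs. The paper first handles length-$1$ and length-$2$ paths separately (bounding $\Pr[(s,t)\text{ revealed}]$ and $\E|S_Q\cap T_Q|$), and only then gives a general balls-and-bins reduction in which a ball is dropped in the bin of both the queried vertex and the returned neighbor, and a ``connection collision'' (the returned neighbor landing in an already-occupied bin) is shown to be necessary for an $s$--$t$ path. You instead track the components $C_s(i),C_t(i)$ of the observed edge set directly, use the deterministic invariant $|C_s(i)|,|C_t(i)|\le i+1$ (from ``a connected subgraph on $k$ vertices needs $\ge k-1$ edges''), and bound the per-step merge probability by $(i{+}1)/(n{-}1)$; summing gives $q(q{+}1)/(2(n{-}1))$. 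This is a more unified and slightly sharper route: it avoids the case split by path length, makes the ``indirect merge is impossible'' step explicit (a single new edge can join $C_s$ and $C_t$ only if its endpoints straddle them), and yields an explicit constant. Two minor remarks: the cases $u_{i+1}\in C_s(i)$ and $u_{i+1}\in C_t(i)$ are mutually exclusive, so you do not actually need a union bound there (a $\max$ suffices); and since your per-step bound holds conditionally on the entire history and on any (possibly random) choice of $u_{i+1}$, the appeal to Yao is unnecessary---the bound applies directly to randomized algorithms.
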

    The proof is an application of the birthday paradox and is stated in \Cref{birthdayparadox-lowererdos}. 
    We note that the proof holds with access to a stronger retrieval oracle that never repeats an edge it has previously returned.
This lower bound holds for the fundamental problem of finding any path between two nodes, not merely a shortest path. 
The core of our argument is based on collision probability (i.e., the birthday paradox) and would apply to $G^* \sim \ER(n,p)$ for any value of $p$.

\textbf{General Lower Bound \& Stronger Oracle.} We now extend this result to the more general setting of Erd\H{o}s–R\'enyi random graphs, even when the learner has access to a powerful retrieval oracle that given a vertex $v$ returns \emph{all} neighbors in $G^*$ of the connected component in pretrained graph $G$ that contains $v$.

\begin{definition}[Connected Component Incident
Retrieval Oracle]
    Let $G^*$ be the ground truth graph and $G$ the pretrained subgraph. For any vertex $v \in V$, let $C_G(v) \subseteq V$ denote the set of vertices in the connected component of $G$ that contains $v$.
    The \emph{CCI retrieval oracle} is a map
    $
    \mathcal{O}^{\mathrm{cci}}_{G^*, G}: V \to 2^{E^*} \cup \{\bot\}
    $
    defined by
    \begin{align*}
        \mathcal{O}^{\mathrm{cci}}_{G^*,G}(v)=
        \begin{cases}
        S_v, & S_v\neq\emptyset,\\
        \bot, & \text{otherwise,}
        \end{cases}
    \end{align*}
    where $S_v:=\{(u,w)\in E^*: u\in C_G(v),\ w\notin C_G(v)\}$.\vspace{-0.3em} 
\label{def:ccincidentquery}
\end{definition}

Consider $G^*\sim \ER (n,p)$ with $p \ge \tfrac{1.5\log n}{n}$, which ensures $G^*$ is connected with high probability. 
We are interested in the sparse but supercritical regime when $p$ is above the connectivity threshold yet far from dense. For instance, when $p=1$ then $G^*$ is the complete graph and a single CCI query (\Cref{def:ccincidentquery}) trivializes path finding. 
Assume the learner also has access to pretrained knowledge $G$ obtained by independently retaining each edge of $G^*$ with probability $\eta$ with $p\cdot\eta < \frac{1}{n}$, placing $G$ in a regime that carries negligible {\em global} connectivity signal.

\begin{theorem}
    In the setup above, any 
    algorithm for finding a path between given vertices $s,t$ 
    either makes \(\Omega(\frac{1}{p \cdot \log^2 n \cdot \sqrt{n}})\) connected component incident retrieval (\Cref{def:ccincidentquery}) queries or finds an $s$–$t$ path with probability at most $p \log^2 n +o(1)$.
    \label{theorem:supernodelowerbound}
\end{theorem}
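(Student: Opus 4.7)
The plan is to reduce the CCI oracle problem on $(G,G^*)$ to a vertex-query path-finding problem in a random \emph{supernode graph} $H$, and then apply a bidirectional-BFS birthday argument in $H$.

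First, I would establish that because $p\eta < 1/n$, the prior $G\sim\ER(n,p\eta)$ is subcritical, and classical Erd\H{o}s--R\'enyi facts give that with high probability every $G$-component has size at most $c\log n$ and the number of components is $V_H = n(1-o(1))$. Contracting each $G$-component to a single supernode yields a graph $H$ on $V_H$ supernodes in which $(C_i,C_j)\in E_H$ iff some edge of $G^*$ crosses between $C_i$ and $C_j$. Because the edges of $G^*$ are independent Bernoulli$(p)$ and the pair sets used for different $(C_i,C_j)$ are disjoint, the $H$-edge indicators are mutually independent given $G$, and
\[
\Pr\bigl[(C_i,C_j)\in E_H \mid G\bigr] \;\le\; 1-(1-p)^{|C_i|\,|C_j|} \;\le\; p\,|C_i|\,|C_j| \;\le\; p\log^2 n.
\]
A CCI query at any $v\in V$ (\Cref{def:ccincidentquery}) returns exactly the set of $H$-edges incident to $C_G(v)$, so it is precisely one vertex query at $C_G(v)$ in $H$. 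Finding an $s$–$t$ path in $G^*$ thus reduces to finding a $C_s$–$C_t$ path in $H$ using vertex queries.

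Next, I would split the success event into (a) the one-hop case $(C_s,C_t)\in E_H$, which by the edge bound above has probability at most $p\log^2 n$; and (b) the multi-hop case in which an $H$-path of length $\ge 2$ is recovered. For case (b), define $R_s$ (resp.\ $R_t$) as the set of supernodes reachable from $C_s$ (resp.\ $C_t$) using only $H$-edges incident to queried supernodes. A new supernode enters $R_s\cup R_t$ only through a query incident to a supernode already in the reachable set, and each queried supernode reveals at most $\Delta=O(V_H\cdot p\log^2 n)=O(np\log^2 n)$ neighbors (by Chernoff over at most $V_H$ Bernoulli$(\le p\log^2 n)$ potential $H$-neighbors), so $|R_s|,|R_t|=O(Q\cdot np\log^2 n)$. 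Case (b) requires $R_s\cap R_t\neq\emptyset$. Exposing each potential $H$-edge only when the first of its two endpoints is queried (principle of deferred decisions) makes each newly revealed supernode's identity nearly uniform on the $V_H - O(Q\Delta)$ unexposed supernodes and independent of the algorithm's history, so a birthday union bound gives
\[
\Pr[\text{case (b)}] \;\le\; \frac{|R_s|\cdot|R_t|}{V_H}\,(1+o(1)) \;=\; O\!\left(\frac{Q^2(np\log^2 n)^2}{n}\right) \;=\; O\!\left(Q^2\cdot np^2\log^4 n\right).
\]
If $Q$ is below the claimed threshold $\Theta\bigl(1/(p\sqrt{n}\log^2 n)\bigr)$ (up to a small constant), the right-hand side is $o(1)$, and combining with case (a) yields total success probability at most $p\log^2 n+o(1)$, which is exactly the theorem's dichotomy.

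The main technical obstacle I anticipate is formalizing the birthday bound, since $R_s$ and $R_t$ are built by an \emph{adaptive} algorithm rather than drawn as uniform subsets of $V_H$. The fix is a deferred-exposure coupling: at any point in the execution, conditional on the algorithm's history the distribution of the as-yet-unexposed edges of $G^*\setminus G$ remains a product of independent Bernoullis with parameter $\approx p$ on the unexposed pairs of $V$. Each new CCI query exposes exactly the slice of this product that crosses the queried component, so the identities of newly revealed supernodes are independent of the query history, giving precisely the independence required for the $|R_s|\cdot|R_t|/V_H$ birthday bound. The structural high-probability facts about the subcritical prior (small components and $V_H=n(1-o(1))$) and the Chernoff bound on $H$-degrees are standard and can be imported directly from classical Erd\H{o}s--R\'enyi theory.
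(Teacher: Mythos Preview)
Your supernode reduction is exactly the paper's first step: contract the subcritical components of $G$ into supernodes and bound each supernode–supernode edge probability by $p\log^2 n$. Where you diverge is in the analysis of the contracted problem. The paper does not run a birthday argument on $R_s$ and $R_t$; instead it adapts the trace-based lower bound of \citet{alon2023sublinear}. Fixing the algorithm's randomness (Yao), it inductively tracks ``useless'' prefixes of the trace (disconnected with at most $2p'n'k$ discovered edges after $k$ queries) and shows that at step $k$ the probability of connecting $s$'s and $t$'s discovered components is at most $p'(|E_{k-1}|+1)\le 2(p')^2n'k$; summing over $k$ gives $(p')^2 n' q^2$, which is numerically the same as your $|R_s||R_t|/V_H$ once you plug in $|R_s|,|R_t|\le q\Delta$ and $\Delta\approx p'V_H$.

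There is one genuine gap in your case split. Your claim that ``case~(b) requires $R_s\cap R_t\neq\emptyset$'' is only true for grounded algorithms. A general algorithm may output an $H$-path of length $\ge 2$ that uses an $H$-edge neither of whose endpoints was ever queried; such an edge is still present with conditional probability $\le p'$, so this contributes an additional $\le p'$ to the success probability that is not captured by either your~(a) or your~(b). The paper avoids this by splitting on ``trace connected'' versus ``trace disconnected'': if the discovered edges do not already connect $C_s$ to $C_t$, then \emph{any} output path (of any length) must contain an unrevealed meta-edge, and that single guessed edge is valid with probability at most $p'$. This cleanly gives $p'+o(1)$ rather than your $p'+o(1)+p'$.

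A smaller point: your justification of $|R_s||R_t|/V_H$ via ``newly revealed supernodes are nearly uniform on unexposed supernodes'' is not quite the right mechanism. The neighbor set of a queried supernode is a product of independent Bernoullis with \emph{unequal} parameters (depending on component sizes), not a uniform subset, so the ``uniform identity'' picture is off. What actually works, and what the paper's step-by-step bound makes explicit, is simply that for any query at $u$ and any fixed target set $T$ of unexposed supernodes, $\Pr[\text{some revealed neighbor lies in }T]\le p'|T|$; summing this over the $q$ queries with $|T|\le q\Delta$ recovers the desired $(p')^2 n' q^2$. Your deferred-decisions intuition is correct, but the argument should be phrased as this per-query union bound rather than as a birthday collision between two ``uniform'' sets.
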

The proof is provided in \Cref{erdosquery} and relies on a reduction that contracts $O(\log n)$-sized components into super-nodes, turning the problem into path finding in a meta-graph. It then uses a trace based analysis, that is, by fixing the algorithm’s randomness, each execution is represented as a trace.
Observe the number of edges discovered after \(\Omega(\frac{1}{p \cdot \log^2 n \cdot \sqrt{n}})\) connected component incidence queries is \(O(\sqrt{n})\) each call can reveal fewer than \(p\cdot\log^2 n \cdot (n-1)\) incident edges.  
Having shown the $\Omega(\sqrt{n})$ lower bound for path recovery, we now present a nearly matching upper bound for Erd\H{o}s-R\'{e}nyi random graphs, which adapts a result of \cite{alon2023sublinear} for regular expander graphs to random graphs that are only approximately regular.
\begin{theorem}
    Consider an Erd\H{o}s-R\'{e}nyi random graph $G^* \sim G(n,p)$, where $p (1-p) \geq C \cdot \frac{\log^4(n)}{n}$ and $C$ is a sufficiently large constant. There exists an algorithm that, with high probability over the randomness of $G^*$,  
    for every node $s$ and every $\delta \in (0,1)$, finds an $s$–$t$ path while visiting $O((\frac{n}{\delta})^{\frac{1}{2} + o(1)})$ vertices for all but a $\delta$-fraction of targets $t$.
\label{RMT}
\end{theorem}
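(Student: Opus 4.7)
The plan is to adapt the bidirectional exploration algorithm of \cite{alon2023sublinear}, which was originally designed for $d$-regular spectral expanders, to the almost-regular Erd\H{o}s--R\'enyi setting. Their algorithm grows two random-walk-like sample sets rooted at $s$ and $t$, each of size $(n/\delta)^{1/2+o(1)}$, and then joins them via a birthday-paradox collision once both sets approximate the stationary distribution. The analysis of \cite{alon2023sublinear} really relies on only two structural properties of the host graph: (a) vertex degrees are all within a $(1+o(1))$ factor of one another, so that the stationary distribution of the simple random walk is $(1\pm o(1))/n$ pointwise, and (b) the normalized adjacency matrix has spectral gap $1-o(1)$, so that short walks mix rapidly. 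My plan is first to establish quantitative versions of (a) and (b) for $\ER(n,p)$ under $p(1-p)\geq C\log^4 n/n$, then to rerun their analysis with all error terms inflated by at most an $n^{o(1)}$ multiplicative factor.

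For item (a), a Chernoff plus union bound yields that every vertex of $G^*$ has degree in $np\cdot(1\pm n^{-1/2}\log^2 n)$ with high probability, since $np\gtrsim \log^4 n$. For item (b), I would invoke standard spectral concentration for $\ER(n,p)$ (e.g., via a Feige--Ofek or Kahn--Szemer\'edi trace-moment computation) to obtain that the second eigenvalue of the normalized adjacency matrix is $O((np)^{-1/2})=o(1)$, so that the lazy simple random walk has mixing time $O(\log n)$. With these properties in hand, the procedure is essentially that of \cite{alon2023sublinear}: from each of $s$ and $t$, launch $K=(n/\delta)^{1/2+o(1)}$ random walks of length $L=O(\log n)$, where each step is realized by one call to the retrieval oracle (which returns a uniform random true neighbor, matching one step of the simple random walk on $G^*$ exactly). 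Let $A_s$ and $A_t$ be the resulting multisets of visited vertices, so $2KL=(n/\delta)^{1/2+o(1)}$ vertices are visited in total. Because each walk distribution converges within total variation $o(1)$ of the near-uniform stationary measure, $A_s$ and $A_t$ are close in law to independent uniform samples of size $K$ each, and the birthday paradox then gives $|A_s\cap A_t|\geq 1$ with probability at least $1-\delta$ whenever $K^2\gtrsim n/\delta$; any collision yields a concatenated $s$--$t$ walk in $G^*$ from which a simple path is extracted by shortcutting cycles.

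The main obstacle is that the birthday guarantee above is naturally averaged over a random target, whereas the theorem requires correctness for every specific $t$ outside a $\delta$-fraction. The standard device, inherited from \cite{alon2023sublinear}, is to define the ``bad'' target set as those $t$ whose hitting probability under the $L$-step walk from $s$ is substantially below $1/n$, and to bound its stationary mass by $\delta$ using the spectral-gap estimate. By near-uniformity of the stationary distribution, this set contains at most $(1+o(1))\delta\cdot n$ vertices, which is absorbed into the $\delta$ slack. A secondary technicality is that a few walks can fail to mix due to residual degree variation or small dense subgraphs; these are controlled by the same concentration inequalities used for (a) and contribute only to the $n^{o(1)}$ factor. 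Combining these pieces yields the claimed $O((n/\delta)^{1/2+o(1)})$ bound on vertices visited, matching \Cref{theorem:supernodelowerbound} up to the $n^{o(1)}$ slack.
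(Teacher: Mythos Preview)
Your approach and the paper's both adapt \cite{alon2023sublinear} to the near-regular Erd\H{o}s--R\'enyi setting, but the algorithms and analyses diverge. The paper runs a \emph{deterministic} bidirectional BFS and proves a \emph{structural} distance bound: using Vu's spectral-norm concentration and Weyl's inequality it controls $\lambda_1(A)$ and $\lambda(A)$, and then---this is the crux---it shows via a power-iteration argument that the top eigenvector $v_1$ of the adjacency matrix is close to the uniform vector $n^{-1/2}1_n$ in the $\ell^\infty$ norm, not merely in $\ell^2$. From that pointwise bound it deduces, for every source $s$ and every $d$, that the set $Z$ of vertices with no length-$d$ walk from $s$ satisfies $|Z|/n \le n\,\rho(A)^{2d}/(1-o(1))$, so at most $\delta n$ vertices lie beyond distance $\tfrac12\log(n/\delta)/\log(1/\rho(A))$. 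Double BFS to half that depth then visits $(\max_i\deg(i))^{d/2}=(n/\delta)^{1/2+o(1)}$ vertices.

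Your route instead launches $K$ random walks of length $O(\log n)$ from each endpoint and looks for a birthday collision. This is plausible, but it does not deliver the guarantee as stated. Total-variation mixing only tells you that the walk endpoints are close to stationary \emph{on average}; the collision argument then yields, for each fixed pair $(s,t)$, success probability $\ge 1-\delta$ over the algorithm's internal randomness. The theorem, however, asks for a guarantee that holds (once $G^*$ is good) for \emph{every} $t$ outside a fixed $\delta$-fraction---a per-vertex statement, not an averaged one. Your proposed fix, defining ``bad'' targets as those with low $L$-step hitting probability and bounding their stationary mass, does not close this gap: even for a ``good'' $t$, the collision event still carries algorithmic failure probability, and the exceptional set of $t$'s for which the walks happen to miss depends on the random coins used at $s$. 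This is exactly why the paper upgrades the $\ell^2$/TV control to an $\ell^\infty$ eigenvector bound: that pointwise control is what turns a mixing statement into a per-target distance statement, which BFS then exploits deterministically. If you want to rescue the random-walk route, you would need an analogous $\ell^\infty$-type bound on the $L$-step transition kernel (i.e., $\min_v (P^L)_{s,v}\ge (1-o(1))/n$), which is essentially the same power-iteration lemma the paper proves.
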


The proof in \Cref{Proof:RMT} uses tools from random matrix theory and is implementable with access to a retrieval oracle that never repeats an edge it has previously returned.
Next, we explore the properties of the ground truth and prior knowledge graph that make retrieval friendliness (\Cref{def:RetrievalFriendliness}) possible.
\section{Upper Bounds: Efficient Test-Time Augmentation Algorithms}
\subsection{Reliable and Sufficient Prior for Efficient Retrieval}
We start by stating a condition that makes retrieval friendliness possible by the strategy of decomposing the task of finding paths into {\em efficient sub-tasks}. 

Intuitively, for path-finding, the model's prior knowledge $G$ must connect disparate regions of the complete knowledge graph $G^*$. Even if we do not know how to globally connect our start $s$ and end $t$ nodes, we can find a subgraph that connects to both. Our formal condition, which we call admissibility, captures this idea: a well-connected component in the prior knowledge is ``visible'' from every node in the graph, covering a constant fraction of its local connections. We formalize this below and then show how a simple bidirectional search strategy can exploit it for efficient retrieval.

\begin{definition}[$\gamma$-Admissible Pair]
Let $G^*$ be the ground-truth graph, $G$ the pretrained subgraph, $\mathcal{O}_{G^*}$ the retrieval oracle,
and $\gamma \in (0,1]$.
For every vertex $u$, let $\mu_u^{G^*}$ be the uniform probability distribution over $N_{G^*}(u)$ given by $G^*$.
We say that $(G^*, G)$ is \emph{$\gamma$-admissible} if there exists a connected component $C$ of $G$ such that, for every vertex $u$ with $N_{G^*}(u) \neq \emptyset$,
\begin{align*}
    \mu_u^{G^*}\big( N_{G^*}(u) \cap V(C) \big) \ge \gamma.
\end{align*}
for some constant $\gamma$.
\label{def:admissiblepair}
\end{definition}

\begin{algorithm}[h]
\caption{Bidirectional-Retrieval Augmentation Generation (BiRAG)}
\begin{algorithmic}[1]
    \Require $\gamma$-admissible pair $(G^*,G)$, endpoints \(s,t\), retrieval oracle \(\mathcal{O}_{G^*}\).
     \State \(E_s \gets \emptyset,\; E_t \gets \emptyset\)
    \Repeat
        \State $e_s \gets \mathcal{O}_{G^*}(s)$
        \State $e_t \gets \mathcal{O}_{G^*}(t)$
        \If{$e_s=\bot$ \textbf{or} $e_t=\bot$} \State \Return \textsc{NO} \EndIf
        \State $E_s \gets E_s \cup \{e_s\}$ \State $E_t \gets E_t \cup \{e_t\}$
        \State \textbf{Augment:} \(\tilde G \gets \textsc{Augment}(G, E_s, E_t)\)
        \State \textbf{Generate:} $\Pi \gets$ $s$–$t$ path from $\tilde G$
    \Until{$\Pi \neq \bot$}
    \State \Return $\Pi$
\end{algorithmic}
\label{algorithm:giantcomponent}
\end{algorithm}

\begin{claim}
    Any $\gamma$-admissible pair is $2/\gamma$-retrieval friendly for bidirectional-retrieval augmentation generation algorithm (\Cref{algorithm:giantcomponent}).
    \label{claim:admissible}
\end{claim}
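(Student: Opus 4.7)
The plan is to interpret $\gamma$-admissibility as a per-call success probability, bound the runtime of \Cref{algorithm:giantcomponent} by the maximum of two geometric random variables, and conclude that the expected query count is $O(1/\gamma)$. Fix the connected component $C$ of $G$ promised by \Cref{def:admissiblepair}. I would organize the argument in three steps: correctness, per-side geometric dominance, and termination.

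\textbf{Correctness and grounding.} If either $N_{G^*}(s)=\emptyset$ or $N_{G^*}(t)=\emptyset$, the retrieval oracle returns $\bot$ on the first iteration and the algorithm outputs \textsc{No}; this is correct since an isolated endpoint cannot reach anything. Otherwise, admissibility forces at least one $G^*$-neighbor of each of $s$ and $t$ to lie in $V(C)$, so both endpoints live in the same $G^*$-component as $V(C)$ and an $s$-$t$ path exists. Since $\tilde G$ only contains edges in $G \cup E_s \cup E_t \subseteq G^*$, any path the ``generate'' step returns is a valid $s$-$t$ path in $G^*$; thus the algorithm is grounded in the sense of \Cref{def:GroundedAlgorithm}.

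\textbf{Per-side geometric dominance.} Define
\[
T_s := \min\{\,k \geq 1 : E_s \text{ after iteration } k \text{ contains an edge whose far endpoint lies in } V(C)\,\},
\]
and define $T_t$ analogously. Applying \Cref{def:admissiblepair} at $u = s$ (and at $u = t$), each independent call to $\mathcal{O}_{G^*}(s)$ returns an edge into $V(C)$ with probability at least $\gamma$. Therefore $T_s$ and $T_t$ are each stochastically dominated by $\mathrm{Geom}(\gamma)$, and in particular $\mathbb{E}[T_s],\,\mathbb{E}[T_t] \leq 1/\gamma$.

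\textbf{Termination bound.} Once $k \geq \max(T_s, T_t)$, the augmented graph $\tilde G$ contains the sampled edges $(s, v_s)$ and $(t, v_t)$ with $v_s, v_t \in V(C)$, together with a $v_s$-to-$v_t$ path already present in $C \subseteq G$ (no extra queries are needed because $C$ is connected inside $G$). Hence the generate step produces a valid path by iteration $\max(T_s, T_t)$ and the loop exits. Since BiRAG issues exactly one query per side per iteration, charging the two sides separately yields expected total queries at most $\mathbb{E}[T_s] + \mathbb{E}[T_t] \leq 2/\gamma$, matching the advertised constant (a slightly weaker max-of-two-geometrics estimate would give $O(1/\gamma)$, sufficing for retrieval friendliness). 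The main, and essentially only, obstacle is the bookkeeping observation that the $v_s$-$v_t$ bridging subpath comes from $C \subseteq G$ at no query cost and that the two sides' retrievals are mutually independent; the rest is an elementary geometric calculation.
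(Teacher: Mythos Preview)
Your proof is correct and follows essentially the same approach as the paper: use admissibility to get a per-call probability $\ge \gamma$ of landing in the distinguished component $C$, bound each endpoint's hitting time by a geometric with mean $1/\gamma$, and then sum the two sides to get $2/\gamma$ expected queries while noting that the bridging $v_s$--$v_t$ subpath lies in $C\subseteq G$ at no query cost. Your explicit acknowledgment that the lockstep algorithm really yields a $\max$-of-two-geometrics bound (hence $O(1/\gamma)$) is a nice touch; the paper's own proof uses the same per-side linearity accounting you do and does not address that subtlety.
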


The bidirectional strategy is given in \Cref{algorithm:giantcomponent}.  After each pair of retrievals we \emph{augment} the pretrained graph with the returned edges and attempt to generate an $s$-$t$ path in the augmented graph.  Concretely, the Augment step forms $\tilde G = (V,\; E \cup E_s \cup E_t)$
by adding the edges in \(E_s\) and \(E_t\) to \(G\).  If the Generate step fails to find an $s$-$t$ path in \(\tilde G\), it returns \(\Pi=\bot\) and the loop continues.  Under the \(\gamma\)-admissibility assumption the loop in \Cref{algorithm:giantcomponent} runs $1/\gamma$ times in expectation, issuing two retrieval calls per iteration. We provide the run time analysis and the full proof in \Cref{proof:admissiblepair}.

As an example, we now show that our condition for efficient retrieval holds with high probability in an Erdős-Rényi graph model $G^*$, provided it contains a sufficiently dense subgraph $G$.

\begin{theorem}
\label{theorem:admissibleErods}
    Consider an Erd\H{o}s–R\'enyi random graph $G^* \sim \ER(n,p)$ with $p > C_0 \frac{\log n}{n}$ for a sufficiently large constant $C_0$. Let pretrained graph $G$ be a subgraph formed by retaining each edge of $G^*$ independently with probability $\eta \in (\frac{1}{\log n}, 1]$. Then, with high probability\footnote{with probability $1-o(n^{-2})$} over the randomness of $G^*$ and $G$, the pair $(G,G^*)$ is $\gamma/3$-admissible with $\gamma \in (0,1]$ being the unique solution to $\gamma=1-e^{-np\eta\gamma}$.
\end{theorem}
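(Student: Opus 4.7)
My plan is to identify $G$ itself as an Erdős--Rényi graph, invoke the classical giant component theorem to obtain the candidate component $C$, and then use an exposure trick to show that a $\gamma/3$ fraction of every vertex's $G^*$-neighborhood lies in $V(C)$.

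First, since edges of $G^*$ are independently retained in $G$ with probability $\eta$, the marginal distribution of $G$ is $\ER(n, p\eta)$ with mean degree $c := n p \eta$. The hypotheses $p > C_0 \log n / n$ and $\eta > 1/\log n$ give $c \geq C_0$, so $G$ is deep in the supercritical regime for $C_0$ chosen large. Standard giant-component results then yield, with probability $1 - o(n^{-2})$, a unique giant component $C$ of $G$ with $|V(C)| = (1 \pm o(1))\gamma n$, where $\gamma$ is the unique positive root of $\gamma = 1 - e^{-c\gamma}$. I take this $C$ as the component witnessing admissibility.

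The core step is showing $|N_{G^*}(u) \cap V(C)| / |N_{G^*}(u)| \geq \gamma/3$ for every $u$, and the main obstacle is that $V(C)$ and $N_{G^*}(u)$ are correlated, since $u$'s edges in $G$ influence the makeup of $C$. To decouple them, I fix $u$, let $G'$ be $G$ restricted to $V \setminus \{u\}$, and let $C'$ be the giant component of $G'$. Then $G' \sim \ER(n-1, p\eta)$, so $|V(C')| \geq (\gamma - o(1)) n$ with probability $1 - o(n^{-2})$, and crucially $C'$ is independent of the edges of $G^*$ incident to $u$. Since $u$'s edges can only merge components in $G$, we have $V(C') \subseteq V(C)$, which gives a clean lower bound to estimate in place of the correlated quantity.

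Conditioning on $C'$, each $v \neq u$ lies in $N_{G^*}(u)$ independently with probability $p$, so $|N_{G^*}(u) \cap V(C')| \sim \mathrm{Bin}(|V(C')|, p)$ with mean at least $(\gamma - o(1)) n p$, while $|N_{G^*}(u)| \sim \mathrm{Bin}(n-1, p)$ with mean $(n-1)p$. Because $np \geq C_0 \log n$, Chernoff bounds concentrate both around their means with failure probability $\exp(-\Omega(\log n))$, which is $o(n^{-3})$ for $C_0$ large. Combining,
\[
\frac{|N_{G^*}(u) \cap V(C)|}{|N_{G^*}(u)|} \;\geq\; \frac{|N_{G^*}(u) \cap V(C')|}{|N_{G^*}(u)|} \;\geq\; \frac{(1-o(1))\gamma n p}{(1+o(1)) n p} \;\geq\; \frac{\gamma}{3}
\]
for large $n$, with the constant $1/3$ absorbing the $(1 \pm o(1))$ slack from applying concentration to both numerator and denominator. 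A union bound over the $n$ choices of $u$ then gives admissibility with probability $1 - o(n^{-2})$. The hard part is the decoupling via $C'$; everything else is standard giant-component theory and Chernoff bookkeeping, with constants absorbed by choosing $C_0$ sufficiently large.
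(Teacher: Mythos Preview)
Your proof is correct and follows the same high-level strategy as the paper: identify the giant component $C$ of $G\sim\ER(n,p\eta)$ via the classical theorem, then use Chernoff bounds to control $|N_{G^*}(u)|$ and $|N_{G^*}(u)\cap V(C)|$ uniformly over $u$. The one substantive difference is how you break the dependence between $V(C)$ and the edges of $G^*$ at $u$. The paper reverses the generation process---sampling $G$ first and then sprinkling the remaining edges of $G^*$ with probability $q=\frac{p(1-\eta)}{1-p\eta}$---and conditions on all of $G$; this forces a case split on whether $u\in C$ and, strictly speaking, yields $|N_{G^*}(u)\cap V(C)|\sim\mathrm{Bin}(|C|,q)$ rather than $\mathrm{Bin}(|C|,p)$ for $u\notin C$. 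Your vertex-deletion trick (pass to the giant $C'$ of $G[V\setminus\{u\}]$ and use $V(C')\subseteq V(C)$) is a cleaner decoupling: it treats all $u$ uniformly, makes $C'$ genuinely independent of $N_{G^*}(u)$, and recovers the exact $\mathrm{Bin}(|V(C')|,p)$ distribution. The price is that you need the giant-component size bound for each $G'$ with failure probability $o(n^{-3})$ to survive the union bound over $u$; this is available from large-deviation estimates for the giant in the supercritical regime, but is worth stating explicitly rather than citing ``standard results.''
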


As long as the edge probabilities $p$ (for the true graph) and $\eta$ (for the prior) are above the connectivity threshold, a giant component in the prior graph exists with high probability.  Furthermore, the random nature of the remaining edges in $G^* \setminus G$ ensures that this component is distributed, making it `visible' from all other nodes, thus satisfying our admissibility condition. 
Formally, using the equivalence that $G^*$ can be obtained by adding independent edges to $G$, we can condition on the giant component of $G$ to show every vertex connects into it with ratio at least $\gamma/3$.
Note that since $np\eta > C_0$ in \Cref{theorem:admissibleErods},  $\gamma$ is always at least some absolute positive constant.
Full details are in  \Cref{proof:admissibleErdos}.
\begin{figure}[ht]
\begin{center}
\centerline{\includegraphics[width=0.35\columnwidth]{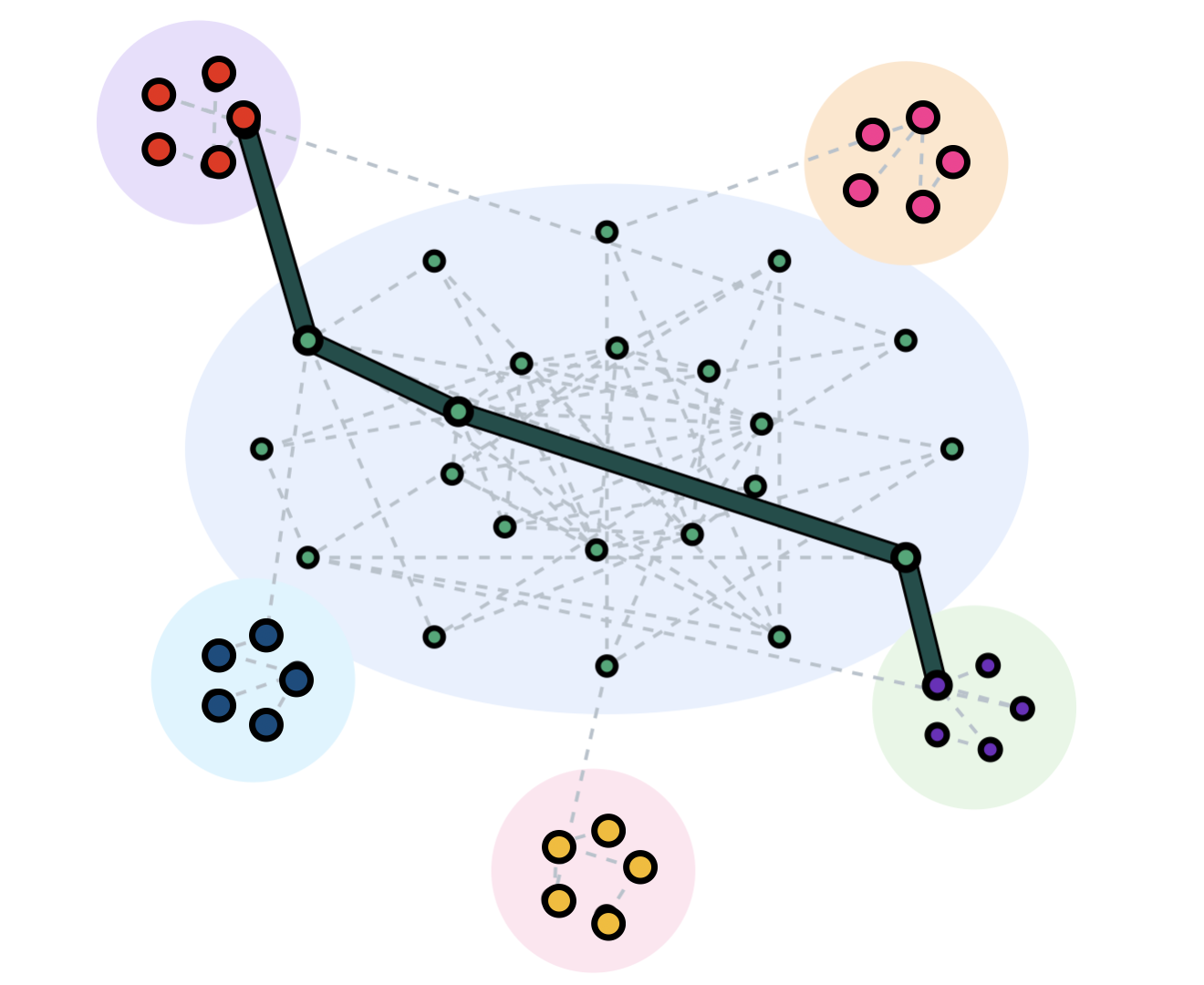}}
\caption{Illustrating $\gamma$-admissible \& \Cref{algorithm:giantcomponent}.}
\end{center}
\vskip -0.3in
\end{figure}
Motivated by priors with community structure, we consider a partitioned revelation model that preserves intra-group edges while suppressing cross-group links.
\begin{remark}
    In \Cref{theorem:admissibleErods}, suppose the pretrained graph is obtained through a much harsher non-uniform revelation process. Let $V=\biguplus_{l=1}^L V_l$ be a fixed partitioning of the vertices for some constant $L$, and the pretrained graph $G$ be a subgraph formed by retaining each \emph{intra-group} edge of $G^*$ independently with probability $\eta \in (\frac{1}{\log n}\cdot\frac{n}{\,\max_l |V_l|},1]$, and discarding all inter-group edges (i.e., $p_{i,j}=\eta\cdot \mathbf{1}\{\exists l: i,j\in V_l\}$). Then, with high probability over the randomness of $G^*$ and $G$, the pair $(G,G^*)$ is admissible.
    \label{remark:Kpartitions}
\end{remark}
\paragraph{Beyond Paths: Finding Steiner Trees.}
One natural extension to finding a path between vertices is considering a set of $M$ input vertices $(s_1,\ldots,s_M)$ and asking whether the learner can recover a Steiner tree connecting them all. This is a proxy for finding a set of facts that connect many entities, e.g., when an LLM must construct a coherent story involving multiple people. We can extend our bidirectional search to an $M$-directional algorithm, which works for admissible graphs. From each $s_i$, we connect it to the giant component and then stitch the paths together. That is, in $\gamma$-admissible pairs, we can find a Steiner tree containing the neighborhoods of $(s_1,\ldots,s_M)$ inside the giant component of $G$, and connect the $s_i$ to it with $M$ extra edges and by making $M/\gamma$ queries in expectations.

\subsection{Sufficient but Unreliable Prior}
\label{verification}
To establish a robust conclusion between two entities, one should not rely on a single line of reasoning. We therefore seek \emph{many} candidate routes whose evidence is separated across the pretrained graph. Concretely, think of an $K$ edge-coloring (or labeling) of $E$: each color (labeling) is a self-contained reasoning route. Concretely, we want $K$ edge-disjoint segments in the pretrained graph.
The benefit is robustness: even if we distrust up to $K-1$ edge types, a single remaining route of a trusted type still suffices to certify correctness. The next definition formalizes an structural property that satisfies this. 
\begin{definition}[$(K, \gamma)$-Robust-Admissible]\label{def:layerwise-global}
    Let $G^*$ be the ground truth graph, $G$ the pretrained subgraph, 
    $K\in\mathbb{N}$ and $\gamma\in(0,1]$. For each vertex $u$, let $\mu_u^{G^*}$ be the uniform probability distribution over $N_{G^*}$ given by $G^*$.
    We say $(G^*,G)$ is \emph{$(K, \gamma)$-robust-admissible} if there exist
    an edge-partition $E=\biguplus_{k=1}^K E_k$, with $G_k:=(V,E_k)$, and for each $i\in[k]$, a connected component $C_k$ of $G_k$ such that $\text{$\forall u\in V$ with }N_{G^*}(u)\neq\emptyset\text{ and all } k\in[K]$
    \begin{align*}
        \mu_u^{G^*}\big( N_{G^*}(u)\cap V(C_k)\big) \ \ge\ \gamma.
    \end{align*}
\end{definition}
We now present an example of such a pair.
\begin{corollary}
    Consider $G^*\sim \ER(n,p)$ with $p > C_1 \frac{\log n}{n}$, where $C_1 = K C_0$, and $C_0$ is the same constant as \Cref{theorem:admissibleErods}.  Let the pretrained graph $G$ be obtained by retaining each edge of $G^*$ independently with probability $\eta \in (\frac{1}{\log n}, 1]$.
    Then, with high probability over the randomness of $G^*$ and $G$, $(G^*,G)$ is $(K,\gamma/3)$-robust-admissible with $\gamma \in (0,1]$ the unique constant solution to $\gamma=1-e^{-np\eta\gamma}$. 
\end{corollary}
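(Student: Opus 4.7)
The plan is to reduce the corollary to a $K$-fold application of \Cref{theorem:admissibleErods} via a random edge coloring. First I would color each edge of $G$ uniformly and independently with a color drawn from $[K]$, producing an edge partition $E(G) = \biguplus_{k=1}^K E(G_k)$ where $G_k := (V, \chi^{-1}(k))$. Since each edge of $G^*$ lies in $G$ independently with probability $\eta$ and, conditional on lying in $G$, receives color $k$ with probability $1/K$, the marginal law of $G_k$ (viewed as a subgraph of $G^*$) coincides with retaining each edge of $G^*$ independently with probability $\eta/K$. The joint law across the $G_k$ is coupled only through the constraint that each edge receives exactly one color, which preserves these marginals, so each $G_k$ individually falls under the hypotheses of \Cref{theorem:admissibleErods} with effective retention $\eta/K$ against the same ground truth $G^*$.

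Next, I would apply \Cref{theorem:admissibleErods} to each pair $(G^*, G_k)$. The required density condition $p > C_0 \log n / n$ is inherited from the stronger assumption $p > K C_0 \log n / n$. The effective retention probability $\eta/K$ still yields $n p (\eta/K) \geq C_0$, so $G_k$ remains in the supercritical regime where a giant component forms and the visibility condition of admissibility is achievable. The theorem then guarantees, with probability $1 - o(n^{-2})$, a connected component $C_k$ of $G_k$ such that every vertex $u$ with $N_{G^*}(u) \neq \emptyset$ has at least a $\gamma/3$ fraction of its $G^*$-neighbors in $V(C_k)$, where $\gamma$ is the fixed point claimed in the corollary. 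Taking a union bound over $k \in [K]$ (which, for $K$ constant, keeps the total failure probability $o(n^{-2})$), the collection $\{(E(G_k), C_k)\}$ simultaneously realizes the edge partition and components required by \Cref{def:layerwise-global}, certifying $(K, \gamma/3)$-robust-admissibility.

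The main obstacle I anticipate is matching the exact constant $\gamma$. The naive black-box reduction yields only a per-color constant $\gamma' = 1 - e^{-n p (\eta/K) \gamma'}$, which is strictly smaller than the stated $\gamma = 1 - e^{-n p \eta \gamma}$. Closing this gap requires exploiting the boosted density $p > K C_0 \log n / n$ inside the proof of \Cref{theorem:admissibleErods} rather than invoking it as a black box: the visibility of the giant component in $G^*$ is controlled by the rate at which a vertex's $G^*$-neighbors fall into $V(C_k)$, which scales with $n p \gamma$; the factor of $\eta$ enters only through the event that the giant exists inside $G$. Thus replacing $\eta$ with $\eta/K$ while multiplying $p$ by $K$ should preserve the fixed-point equation up to the constants hidden in $C_0$. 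I would therefore reopen the $G^*$-conditional coupling used in \Cref{theorem:admissibleErods} and verify that the visibility step transports unchanged, which is the only nontrivial check beyond the coloring reduction and union bound.
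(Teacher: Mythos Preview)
Your approach is exactly the paper's: random $K$-coloring of $E(G)$, observe that each $G_k$ has the marginal law of retaining each edge of $G^*$ with probability $\eta/K$, invoke \Cref{theorem:admissibleErods} for each $(G^*,G_k)$, and union bound over $k\in[K]$. The paper's entire proof is this five-line argument, applied as a black box.

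Your worry about the constant $\gamma$ is well-placed and in fact more careful than the paper. A black-box application of \Cref{theorem:admissibleErods} with retention $\eta/K$ yields $\gamma'/3$-admissibility for the per-color fixed point $\gamma'=1-e^{-np(\eta/K)\gamma'}$, which is strictly smaller than the $\gamma$ stated in the corollary. The paper's proof simply writes ``the pair $(G_k,G^*)$ is $\gamma/3$-admissible'' without distinguishing the two fixed points, so it does not resolve the discrepancy you flagged. Your proposed fix---reopening the proof of \Cref{theorem:admissibleErods} to argue that the visibility ratio depends on $|C_k|/n$ rather than on $\eta$ directly---does not actually recover the larger $\gamma$, because the giant of $G_k$ genuinely has size $\approx\gamma' n$, not $\gamma n$. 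The honest conclusion is that the corollary as stated should carry the per-color constant $\gamma'$ (or equivalently $\gamma$ defined via $np\eta/K$), and both your reduction and the paper's deliver exactly that.
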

\begin{proof} 
 Independently color each retained edge of $G$ uniformly with one of $K$ colors, yielding an edge-partition $E=\biguplus_{k=1}^K E_k$ and subgraphs $G_k:=(V,E_k)$.  For each $k$, an edge of $G^*$ lands in $E_k$ with probability $\eta/K$, so marginally $G_k \sim\ \ER(n,\; \frac{\eta p}{K}).$  As a direct corollary of \Cref{theorem:admissibleErods}, for each $k\in [K]$ with high probability over the randomness of $G^*$, $G$ and $G_k$, the pair $(G_k,G^*)$ is $\gamma/3$-admissible. The union bound gives that this holds for all $k\in[K]$ simultaneously with high probability. Therefore, it follows with high probability over the randomness of $G^*$ and $G$, the pair is $(K,\gamma/3)$-robust-admissible.   
\end{proof}
\begin{theorem}
\label{thm:disjoinpaths}
    There exists an algorithm that for any $(K, \gamma)$-robust-admissible pair $(G^*, G)$, for any two vertices $s, t \in V$ makes $O(\frac{\log K}{\gamma})$ calls to the retrieval oracle (\Cref{def:RetrievalOracle}) in expectation and
    constructs internally $K$-connected subgraph between $s$ and $t$.
\end{theorem}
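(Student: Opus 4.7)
My plan is to parallelize \Cref{algorithm:giantcomponent} (BiRAG) across the $K$ colors by pooling retrieval calls. By $(K,\gamma)$-robust-admissibility, for each color $k \in [K]$ the pair $(G_k, G^*)$ is itself $\gamma$-admissible with giant component $C_k$, so \Cref{claim:admissible} already gives a $2/\gamma$-query algorithm for any single color. Running BiRAG independently $K$ times would cost $O(K/\gamma)$; the saving to $O(\log K / \gamma)$ comes from the observation that a single retrieval $\mathcal{O}_{G^*}(s) = (s,v)$ simultaneously serves as a bridge for \emph{every} $k$ with $v \in V(C_k)$.

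Concretely, the algorithm loops: in each iteration it queries $\mathcal{O}_{G^*}(s)$ and $\mathcal{O}_{G^*}(t)$ and accumulates the outputs into $R_s$ and $R_t$, stopping once both sets intersect every $V(C_k)$. Let $T_s^k$ be the first iteration at which $R_s$ intersects $V(C_k)$; admissibility makes each query ``good for color $k$'' with probability at least $\gamma$, so $T_s^k$ is stochastically dominated by a $\mathrm{Geom}(\gamma)$ variable. A union bound gives $\Pr[\max_k T_s^k > t] \le K(1-\gamma)^t$, which integrates to $\mathbb{E}[\max_k T_s^k] = O(\log K / \gamma)$, and symmetrically for $t$, yielding the claimed query bound. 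After termination, for each $k$ I would select $v_s^k \in R_s \cap V(C_k)$ and $v_t^k \in R_t \cap V(C_k)$ and compute (without further oracle calls, since $G$ is fully known to the algorithm) a path $Q_k$ inside $G_k$ connecting them. The output is the subgraph $P := R_s \cup R_t \cup \bigcup_{k=1}^K E(Q_k)$.

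For internal $K$-connectivity, the key structural observation is that the interiors $E(Q_k) \subseteq E_k$ are pairwise edge-disjoint, since by definition the partition $E = \biguplus_k E_k$ splits the known edges among the colors. Hence any adversarial removal of $F \subseteq P \cap G$ with $|F| \le K-1$ can destroy the interior of at most $K-1$ distinct color paths, leaving at least one $Q_{k^*}$ intact. To then argue that $s$ and $t$ remain connected in $P \setminus F$ through this surviving color, I would contract all non-$G$ bridges of $P$ (edges in $E^* \setminus E$ are not removable by the adversary) and use a Menger-style argument on the contracted multigraph to exhibit $K$ edge-disjoint $s$-$t$ paths.

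The hard part will be handling \emph{shared $G$-bridges}: a single retrieved vertex $v$ may lie in $C_{k_1} \cap C_{k_2}$, and if $(s,v) \in E$ then removing this one bridge severs two colors' bridges at once, potentially collapsing the edge-disjoint-paths count. I expect to resolve this either (i) by selecting the endpoints $v_s^k$ through a system of distinct representatives for the family $\{R_s \cap V(C_k)\}_{k \in [K]}$, invoking Hall's theorem together with an argument that the pooled coverage delivered by $O(\log K / \gamma)$ retrievals produces large enough overlaps, or (ii) by slightly extending the retrieval schedule so that $R_s, R_t$ supply enough redundancy for the contracted graph to be $K$-edge-connected directly. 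Either refinement should preserve the $O(\log K / \gamma)$ expected query bound.
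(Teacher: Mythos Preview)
Your approach is essentially the paper's: pool retrievals at each endpoint, bound the time to cover all $K$ ``bins'' by the union bound $\Pr[\exists\text{ empty bin after }t\text{ draws}]\le K(1-\gamma)^t \le K e^{-\gamma t}$, and then observe that the interiors $\Pi_{C_k}\subseteq E_k$ are pairwise edge-disjoint because the $E_k$ partition $E$. The paper's proof stops right there. It explicitly allows one retrieved vertex to serve as the anchor for several colors (``one ball may occupy multiple bins\ldots which only helps cover faster'') and does not carry out any Menger-style contraction or discuss the shared-$G$-bridge scenario you flag as ``the hard part.''

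So the subtlety you raise---a single edge $(s,v)\in G$ serving as the bridge for several colors, whose one removal could sever all those routes---is real but is not handled in the paper's proof either; you are already being more careful than the source. To match the paper you need neither of your proposed fixes: edge-disjoint interiors is taken as the justification for internal $K$-connectivity. If you want a self-contained argument, option~(i) (distinct representatives, so the $K$ full paths are edge-disjoint) is the cleaner route; note in that case you should also handle the situation where $s$ or $t$ already lies in some $V(C_k)$, which removes the need for a bridge in that color altogether.
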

While the statement above is similar to a coupon collector argument we note that here every partition is hit with probability at least $\gamma$; hence, $O(\frac{\log K}{\gamma})$ queries suffice. The proof is provided in \Cref{proof:coupons}. 

\begin{remark}
    There exists an algorithm that for any $(K, \gamma)$-robust-admissible pair $(G^*, G)$, for any two vertices $s, t \in V$ makes $O(1/\gamma)$ calls to the retrieval oracle (\Cref{def:RetrievalOracle}) in expectation and
    constructs internally $K/2$-connected subgraph between $s$ and $t$.
\end{remark}

\paragraph{From Robustness to Verification}
The use of large pretrained models as priors is promising, but their inherent unreliability creates a fundamental trade-off. In the discussed robustness guarantee above, we assumed $E$ can be partitioned into $k$ types and each type sees a constant fraction of true neighbors. Thus, trusting any one edge type yields a correct $s,t$ path with efficient retrieval. When this assumption may fail we must switch to \emph{verification}. That is, use the prior $G$ to generate candidate $s,t$ paths and let a verifier certify the first valid one. Concretely, a verifier for a graph $G^* = (V,E^*)$ is an oracle that given two vertices $(u,v)$, returns $\textsc{yes}$ if $(u,v) \in E^*$ and $\textsc{no}$ otherwise.
This approach does not need a strong structural assumption and is appropriate when a verifier is available. To get a sense of how efficient the \emph{generate then verify} approach can be, imagine that the prior graph is reasonably {\em grounded}: whenever it suggests a short path of length $c$, each edge in that path has at least probability $r$ of being correct in the true graph. In other words, a whole $c$-hop path from the prior survives in the ground truth graph with probability at least $r^c$. Under this assumption, it is straightforward to see that we can find and certify a path using only about $O(c/r^c)$ verifier queries in expectation. 

\section{Conclusion}
We provide the first theoretical framework of test-time augmentation with multiple types of query models. We analyze the $s$–$t$ path finding problem serving as a basic testbed for reasoning
as well as the internally $K$-connected problem, a generalization of it. We study the interplay between the test time query complexity of solving these problems and prior knowledge in various types of graphs, providing upper and lower bounds. Depending on properties of prior knowledge, our bounds delineate ``easy'' regimes where we can find a correct $s$–$t$ path with few retrieval query calls as well as ``hard'' regimes where any algorithm requires $\Omega(\sqrt{n})$ or even $\Omega(n)$ queries. 
Overall, our results provide evidence that density and the structure of the pretrained knowledge is critical for efficient RAG or tool use. We list several problems in~\Cref{FutureDirections}.

\textbf{Limitations.} One shortcoming is that we only study a few oracle models, and there may be different trade-offs for other test-time augmentation methods. For example, it would be ideal to more closely align with similarity-based retrieval methods in real RAG systems. Another limitation is that our asymptotic analysis may not be precise enough to explain the nuanced trade-offs in real-world systems. As another aspect, our upper-bound results address the existence of certain subgraphs rather than the optimal versions (e.g., shortest path or minimum spanning tree), which we view as an important direction for future work. Additionally, while we studied multiple, distinct graph families, we did not fully characterize all ways to generate the prior knowledge graph $G$ and the target graph $G^*$.
Finally, retrieval friendliness is a broad concept, extending well beyond the path-finding problem. Characterizing the algorithms and conditions that enable it across problems remains a compelling direction for theory and practice.

\section*{Acknowledgments}
This work was supported in part by the National Science Foundation under grants CCF-2212968, DMS-2502259, and ECCS-2216899, by the Simons Foundation under the Simons Collaboration on the Theory of Algorithmic Fairness, and by the Office of Naval Research under MURI Grant N000142412742 and Grant N000142412700. We would like to thank Parsa Mirtaheri, Enric Boix-Adserà, Andrew Tomkins, and Rocco Servedio for helpful discussions.
\bibliography{general}
\newpage
\appendix
\section{Appendix}\label{Appendix:proofs}

\begin{definition}[Prior-Aware Retrieval Oracle with Memory]
\label{def:RetrievalOracleKnowledge}
    Let $G^*=(V,E^*)$ be the ground-truth graph and $G=(V,E)$ a pretrained subgraph.  
    The \emph{retrieval oracle} is specified by the family $\{\pi_u^{G^*}\}_{u\in V}$ where each $\pi_u^{G^*}$ is the uniform probability distribution over neighbors of $u$ in $G^*$.

    The oracle never repeats an edge it has already revealed or one present in the prior graph. It maintains a set of seen edges \(E_{\mathrm{seen}}\subseteq E^*\), initialized as \(E_{\mathrm{seen}} := E\).  
    For each vertex \(u\in V\), let  
    \[N_{G^*}^{\mathrm{unseen}}(u)=\{v\in N_{G^*}(u):(u,v)\notin E_{\mathrm{seen}}\},\]  
    and let $\pi_u^{G^*,\mathrm{unseen}}$ denote the restriction of $\pi_u^{G^*}$ to this set (renormalized).  
    On query $u\in V$, the oracle returns
    \[
    \mathcal{O}_{G^*}(u)=
    \begin{cases}
        (u,v) & \text{with probability }\pi_u^{G^*,\mathrm{unseen}}(v),\\[3pt]
        \bot & \text{if } N_{G^*}^{\mathrm{unseen}}(u)=\emptyset.
    \end{cases}
    \]
    After returning $(u,v)$, the oracle updates \(E_{\mathrm{seen}}\) and its dependent sets accordingly.
\end{definition}

\subsection{Proof of \Cref{proposition:RandomizedAdaptive}}
\label{appendix:RandomizedAdaptive}
\begin{proof}
    By Yao's Minimax Principle, it suffices to prove a lower bound for the best deterministic algorithm against an input distribution that is hard on average.
    In the distribution from the proposition the bridge endpoints $(u,v)$ are chosen uniformly and independently.
    A pair of vertices $(s,t)$ chosen uniformly at random falls into one of following cases. Either both $s$ and $t$ lie in the same partition which occurs with probability approaching $1/2$ and a path already exists within the prior and no retrieval queries is needed. 
    Or, $s$ and $t$ lie in different partitions, and any path from $s$ to $t$ must traverse the hidden bridge $(u,v)$.
    Thus, to achieve an overall success probability of at least $2/3$ on a uniformly random pair, an algorithm must have at least a constant success probability on the inter-star instances. We establish a lower bound for this sub-problem which reduces to identifying the bridge. For the sake of proving a lower bound suppose access to a the prior aware retrieval oracle with memory (\Cref{def:RetrievalOracleKnowledge}). 
    
    We first demonstrate that an optimal algorithm will focus on identifying one of the bridge's endpoints. That is, the learner queries the leaves of one star sequentially in order to find the target hidden leaf $u \in S\setminus\{c_s\}$ among $\frac{n}{2} - 1$ such candidates. 
    To see why this strategy is optimal, suppose the algorithm instead made $q_1$ queries to the leaves of the star centered at $c_s$ and $q-q_1$ queries to the leaves of the star centered at $c_t$. The probability of missing the target leaf on the star centered at $c_s$ is $1-\frac{q_1}{n/2-1}$ and the probability of missing the special leaf on the star centered at $c_t$ is $1-\frac{q-q_1}{n/2-1}$. Hence the success probability is $$1- (1-\frac{q-q_1}{n/2-1})\cdot(1-\frac{q_1}{n/2-1}) = \frac{q}{\,n/2-1\,}-\frac{q_1(q-q_1)}{(n/2-1)^2}.$$
    
    Now, consider a learner that focuses on the leaves of one star instead and uses $q$ retrieval queries.  Since the retriever is prior aware with memory, this process is equivalent to choosing $q$ leaves all at once from a set of $n/2 - 1$ leaves to find the target leaf.
    \[
    \Pr[\text{bridge discovered within } q \text{ queries}] = \frac{q}{n/2 - 1},
    \]
    Demonstrating the latter strategy is optimal since $\frac{q}{\,n/2-1\,}-\frac{q_1(q-q_1)}{(n/2-1)^2}$ is strictly smaller than $\frac{q}{\,n/2-1\,}$.

    Therefore, for an algorithm to succeed on the inter star instances with a constant probability, it must make $q = \Omega(n)$ queries (it is easy to see that an adaptive learner has the same query complexity as well).
    The overall expected query complexity is the average over both cases; thus, any algorithm that succeeds with an overall probability of at least $2/3$ must perform $\Omega(n)$ queries. 
\end{proof}

\subsection{Proof of \Cref{Proposition:birthdayparadox}}
\begin{proof}
    \label{birthdayparadox-lowererdos}
    Let $k_s,k_t$ be the number of queries at $s$ and $t$, and $Q=q_s+q_t$. The probability that the direct edge $(s,t)$ is revealed is at most $\frac{q_s}{n-1}+\frac{q_t}{n-1}\le \frac{Q}{n-1}$. To succeed with probability at least $1/2$ we need $Q=\Omega(n)$; thus, the learner needs to target finding a path of length at least two.
    
    Next, the proof establishes the lower bound by first considering the specific problem of finding a length-two path, whose structure motivates a more general argument.
    A query to a vertex $v \notin \{s,t\}$ finds a path only if the returned neighbor is $s$ (and $v$ is known to be a neighbor of $t$) or $t$ (and $v$ is known to be a neighbor of $s$); therefore, without loss of generality assume that $\mathcal{A}$ queries $s$ and $t$, and $Q=q_s+q_t$. After $q_s$ and $q_t$ queries, the discovered neighbor sets
    $S_Q, T_Q\subseteq V\setminus\{s,t\}$ satisfy $|S_Q|\le q_s$, $|T_Q| \le q_t$. 
    A length-2 path is found iff $S\cap T\neq\emptyset$, that is we need $\Pr(\text{success})=\Pr(|S_Q\cap T_Q|\ge1) \leq \mathbb{E}[|S\cap T|]
    \le  \frac{(n-2)q_sq_t}{(n-1)^2}$ to be at least $1/2$.
    For fixed total $Q=q_s+q_t$, the product $q_sq_t$ is maximized at $q_s=q_t=Q/2$; thus, for a success probability greater than half we need $ Q = \Omega(\sqrt{n})$. We note that this search for a common element between two incrementally revealed sets has the structure of the birthday paradox; we now generalize this to paths of longer lengths.
    The argument rests on reducing the path-finding problem to that of inducing a collision, defined as any event where a query returns a vertex that has already been discovered. Label $V \setminus \{s,t\}=\{1,\dots,n-2\}$, and in each round $i$ the algorithm chooses a vertex $u_i\in V$ and the oracle returns a random neighbor $v_i\in N_{G^*}(u_i)$. 
    At each round, place a ball in the bin of the queried vertex and in the bin of the returned neighbor. Let a \emph{connection collision} be the event that the \emph{returned neighbor} $v_i$ falls into a bin that already contains a ball from some earlier round. Note that finding a connection collision is a lower bound on finding a connected path, and by birthday paradox, any algorithm needs $\Omega(\sqrt{n})$ rounds in expectation before the first connection collision, and hence $\Omega(\sqrt{n})$ retrieval queries in expectation before it can find an $s$-$t$ path.  
    Therefore, the minimum expected number of queries required by is $\Omega(\sqrt{n})$, completing the proof.
\end{proof}

\begin{lemma} [$K$-Birthday Paradox]
\label{lem:k-birthdayparadox} 
    Throw $m$ balls independently and uniformly into $n$ bins, and let $C$ be the number of bins with at least $2$ balls. Then, $m =o(\sqrt{{Kn}})$ implies $C< K$ with high probability. 
\end{lemma}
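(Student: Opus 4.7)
The plan is a direct first-moment argument: bound the expected number of multiply-occupied bins by the expected number of colliding pairs of balls, and then apply Markov's inequality.

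First, I would set up indicator variables for pairwise collisions. For balls $i<j$ among the $m$ thrown, let $X_{ij}$ be the indicator that balls $i$ and $j$ land in the same bin. Since the bins are chosen independently and uniformly at random, $\Pr[X_{ij}=1]=1/n$, and therefore
\begin{align*}
    \Exp\Bigl[\sum_{i<j} X_{ij}\Bigr] \ =\ \binom{m}{2}\cdot\frac{1}{n} \ \le\ \frac{m^2}{2n}.
\end{align*}

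Next, I would relate $C$ to this sum. If bin $b$ contains $k_b$ balls, it contributes $\mathbf{1}\{k_b\ge 2\}$ to $C$ and $\binom{k_b}{2}$ to $\sum_{i<j} X_{ij}$. Since $\binom{k}{2}\ge 1$ whenever $k\ge 2$, summing over bins gives $C \le \sum_{i<j} X_{ij}$ pointwise, so $\Exp[C]\le m^2/(2n)$.

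Now I would plug in the hypothesis $m=o(\sqrt{Kn})$, which gives $m^2 = o(Kn)$ and hence $\Exp[C] = o(K)$. By Markov's inequality,
\begin{align*}
    \Pr[C \ge K] \ \le\ \frac{\Exp[C]}{K} \ =\ o(1),
\end{align*}
so $C<K$ with high probability, as claimed.

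I do not expect any substantial obstacle: the argument is essentially the standard birthday-paradox first-moment bound, with the only subtlety being the comparison $C \le \sum_{i<j} X_{ij}$, which is immediate from the inequality $\binom{k}{2}\ge \mathbf{1}\{k\ge 2\}$. No concentration inequality beyond Markov is needed, since we only require a high-probability upper bound on $C$, not a matching lower bound.
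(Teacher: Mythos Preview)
Your proof is correct and in fact more elementary than the paper's. The paper proceeds by Poissonization---replacing the multinomial bin loads $X_i$ by i.i.d.\ Poisson$(\lambda)$ variables $Y_i$ with $\lambda=m/n$---then bounds $\Pr(Y_i\ge 2)\le\lambda^2$ to get $\E\bigl[\sum_i \mathbf{1}\{Y_i\ge 2\}\bigr]\le m^2/n$, and finishes with a Chernoff bound on the sum of independent Bernoullis. You bypass both Poissonization and Chernoff by the pointwise inequality $C\le\sum_{i<j}X_{ij}$ and a direct application of Markov. What the paper's route would buy, in principle, is an exponential tail bound rather than just $o(1)$; but since the lemma only asks for ``with high probability,'' your first-moment argument is fully sufficient and arguably the cleaner proof of the statement as written.
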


\begin{proof}
    Let $X_i$ be the number of balls in bin $i$. The event of interest is $  \sum_{i=1}^n \mathbf{1}\{X_i\ge 2\} \ge K$ which is monotone in $m$. By standard Poissonization,
    \[
    \Pr\Big(\sum_{i=1}^n \mathbf{1}\{X_i\ge 2\}\ge K\Big)
     \le 2 \Pr\Big(\sum_{i=1}^n \mathbf{1}\{Y_i\ge 2\}\ge K\Big),
    \]
    where $Y_1,\dots,Y_n$ are i.i.d.\ $\mathrm{Poi}(\lambda)$ with $\lambda=m/n$. Let $Z_i=\mathbf{1}\{Y_i\ge 2\}$ and $\mu=\mathbb{E}[Z_i]=\Pr(Y_i\ge 2)$. Observe that
    \[
    \mu=1-e^{-\lambda}(1+\lambda) \le 1-(1-\lambda)(1+\lambda) = \lambda^2 = (\frac{m}{n})^2
    \]
    and we have $\mathbb{E}[ \sum_{i=1}^n Z_i ] =  n\mu \le \frac{m^2}{n}$. Therefore, for any $\epsilon>0$,  if $m \le \sqrt{\frac{K n}{1+\epsilon}}$ then $K \geq (1+\epsilon)\, \mathbb{E}[ \sum_{i=1}^n Z_i ]$. 
    Thus, by Chernoff bound 
    \[
    \Pr\Big(\sum_{i=1}^n Z_i \ge K\Big) \le  \Pr\Big( \sum_{i=1}^n Z_i \geq (1+\gamma)n\mu\Big) \leq \exp  (- n\mu\epsilon^2/3 ),
    \]
    completing the proof.
\end{proof}

\begin{remark}
\label{rem:k-edge-disjoint}
In the setting of \Cref{Proposition:birthdayparadox}, any algorithm that finds $K$ edge disjoint $s$–$t$ paths with constant probability requires $\Omega(\sqrt{K n})$ retrieval queries.
First, note that there can be \emph{at most one} length one path; therefore, for $K\ge 2$ at least $K-1$ of the paths must have length more than one. Moreover, finding the direct edge path requires $\Omega(n)$ queries as stated before.
For length two paths, let $q_s,q_t$ be the numbers of queries at $s$ and $t$, and set $Q:=q_s+q_t$. As in the proof above,
\[
\mathbb{E}\big[\,|S_Q\cap T_Q|\,\big]
=\sum_{v}\Pr(v\in S_Q\cap T_Q)
\le \frac{(n-2)\,q_s q_t}{(n-1)^2}
\le \frac{(n-2)\,Q^2}{4(n-1)^2}.
\]
By Markov’s inequality,
\[
\Pr\big(|S_Q\cap T_Q|\ge K\big)\ \le\ \frac{\mathbb{E}[|S_Q\cap T_Q|]}{K}
\ \le\ \frac{(n-2)Q^2}{4K(n-1)^2}.
\]
Thus achieving constant success probability for $K$ edge disjoint $s$-$t$ paths requires
$Q=\Omega(\sqrt{K n})$.

For longer paths, we can again use the balls and bins argument and reduce it to $K$-birthday paradox (see \Cref{lem:k-birthdayparadox}). Each edge disjoint path needs at least one distinct intermediate vertex, so we need at least $K$ connection collisions, that is, at least $K$ bins with at least two balls which requires $Q=\Omega(\sqrt{K n})$ as shown in \cref{lem:k-birthdayparadox}
\end{remark}

\subsection{Proof of \Cref{theorem:supernodelowerbound}}
\label{erdosquery}   
\begin{proof}
    Our proof adapts Theorem 4 of \cite{alon2023sublinear}. 
    Given each edge from $G^*$ is retained in the pretrained graph $G$ with probability $\eta$, the pretrained graph $G$ itself is an Erd\H{o}s–R\'enyi random graph $G \sim \ER(n, p_{\text{partial}})$ with $p_{\text{partial}}  = p \cdot \eta < \frac{1}{n}$.
    It is known that in this subcritical regime an Erd\H{o}s–R\'enyi random graph  $G \sim G(n,p)$ with $p < 1 /n$ , the largest connected component has size $O(\log n)$ with high probability.
    
    We note that the generation process is equivalent to first sampling $G\sim\ER(n,p\eta)$ and then, to form $G^*$, adding each edge not present in $G$ independently with probability $q=\frac{p-p\eta}{1-p\eta}$. This ensures $G^*$ is a valid $\ER(n,p)$ graph. This allows us to first condition on the realization of $G$ (and thus the partition of the vertices and connected components) and then analyze the properties of $G^*$ and the meta graph we introduce in what follows.
     
    For the lower bound, take the extremal case where all components have size $C\log n$ for a fixed absolute constant \(C\ge 1\). Contracting components yields $n'=\frac{n}{C\log n}$ \emph{super-nodes}.
    Let $G'$ be the meta-graph on the super nodes. Two super nodes are adjacent in $G'$ if there exists at least one cross edge in $G^*$ between their underlying components. By a union bound over at most $(C\log n)^2$ potential cross-edges between two components,
    \[
        \Pr[\text{edge in } G'] \le (C\log n)^2 p.
    \]
    With $p=\frac{\log n}{n}$ and $n'=\frac{n}{C\log n}$, this simplifies to
    \[
        \Pr[\text{edge in } G'] \le \frac{C^2\,\log^3 n}{n} = \frac{C\,\log^2 n}{n'} \le  \frac{4C\,\log^2 n'}{n'} .
    \]
    Therefore $G'$ is no denser than $\ER \left(n',\, \frac{4C\,\log^2 n'}{n'}\right)$ for large $n'$.

    Let \( \mathcal{A}^* \) be a (possibly randomized) algorithm for computing an \(s\)-\(t\) path in \(G'\). Without loss of generality, we label vertices so that \(s=1\) and \(t=n'\) as it does not change the success probability.
    Let \( \alpha^* \) denote the probability that \( \mathcal{A}^* \) outputs a valid \( s \)-\( t \), that is, the path also exists in $G^*$.  
    Suppose \( \mathcal{A}^* \) has access to a node incident retrieval oracle. This oracle for a queried vertex $u$ returns the entire set of edges incident to $u$ in $G^*$ or $\bot$ if $u$ is isolated. Observe that node incident retrieval queries in $G'$ are equivalent to connected component incident retrieval (\Cref{def:ccincidentquery}) queries in $G'$. Let \( q \) be the worst case number of node incident retrieval queries made by \( \mathcal{A}^* \).
    
    Note that for \( \mathcal{A}^* \) making an expected \( q \) queries, one can make it worst case \( O(q) \) queries by decreasing \( \alpha^* \) by a small additive constant. Here the probability is over both the random choices of algorithm \( \mathcal{A}^* \) and the randomness of graph \( G' \). By linearity of expectation, we may fix the random choices of \( \mathcal{A}^* \) to obtain a deterministic algorithm \( \mathcal{A} \) that outputs a valid \( s \)-\( t \) path with probability \( \alpha \geq \alpha^* \). It thus suffices to prove an upper bound on \( \alpha \) for such deterministic \( \mathcal{A} \).
    
    For the graph \( G' \), let \( \pi(\mathcal{A}, G') \) denote the \emph{trace} of running the deterministic \( \mathcal{A} \) on \( G' \). If \( i_1(G'), \ldots, i_q(G') \) denotes the sequence of edges queried by \( \mathcal{A} \) on \( G \), and \( \mathcal{N}_1(G'), \ldots, \mathcal{N}_q(G') \) denotes the returned sets of edges, then
    \[
        \pi(\mathcal{A}, G') = \langle i_1(G'), \mathcal{N}_1(G'), i_2(G'), \ldots, i_q(G'), \mathcal{N}_q(G') \rangle.
    \]
    If we condition on a particular trace \( \tau = (i_1, N_1, i_2, \ldots, i_q, N_q) \), the distribution of \( G' \) conditioned on \( \pi(\mathcal{A}, G') = \tau \) is the same as if we condition on the set of edges incident to \( i_1, \ldots, i_q \) being \( N_1, \ldots, N_q \). This is because the algorithm \( \mathcal{A} \) is deterministic and the execution of \( \mathcal{A} \) is the same for all graphs \( G' \) with the same such sets of edges incident to \( i_1, \ldots, i_q \). Furthermore, no graph \( G' \) with a different set of incident edges for \( i_1, \ldots, i_q \) will result in the trace \( \tau \).

    For a trace \( \tau = (i_1, N_1, \ldots, i_q, N_q) \), call the trace \emph{connected} if there is a path from \( s \) to \( t \) using the discovered edges
    \[
        \bigcup_{j=1}^{q} N_j.
    \]
    Otherwise, call it \emph{disconnected}. Intuitively, if a trace is disconnected, then it is unlikely that \( \mathcal{A} \) will succeed in outputting a valid path connecting \( s \) and \( t \), as it has to guess some of the edges along such a path. Furthermore, if \( \mathcal{A} \) makes too few queries, then it is unlikely that the trace is connected. Letting \( \mathcal{A}(G') \) denote the output of \( \mathcal{A} \) on the graph \( G' \), we have for a random graph \( G' \) that
    
    \[
        \alpha = \Pr[ \mathcal{A}(G') \text{ is valid} ] \leq \Pr[ \pi(\mathcal{A}, G') \text{ is connected} ] + \Pr[ \mathcal{A}(G') \text{ is valid} \mid \pi(\mathcal{A}, G') \text{ is disconnected} ].
    \]
    We first bound $ \Pr[ \mathcal{A}(G') \text{ is valid} \mid \pi(\mathcal{A}, G') \text{ is disconnected} ].$
    For this, let \( \tau = (i_1, N_1, \ldots, i_q, N_q) \) be an arbitrary disconnected trace in the support of \( \pi(\mathcal{A}, G') \) when \( G' \) is an Erdős–Rényi random graph, where each edge is present with probability \( p' \geq \frac{4 C \log^2 n'}{n'}  \). Observe that the output of \( \mathcal{A} \) is determined from \( \tau \). Since \( \tau \) is disconnected, the path reported by \( \mathcal{A} \) on \( \tau \) must contain at least one edge \( (u,v) \) where neither \( u \) nor \( v \) is among \( \cup_j \{i_j\} \), or otherwise the output path is valid with probability \( 0 \) conditioned on \( \tau \). But conditioned on the trace \( \tau \), every edge that is not connected to \( \{i_1, \ldots, i_q\} \) is present independently with probability \( p' \). We thus conclude:
    \[
        \Pr[ \mathcal{A}(G') \text{ is valid} \mid \pi(\mathcal{A}, G') = \tau ] \leq p'.
    \]
    Since this holds for every disconnected \( \tau \), we conclude:
    \[
        \Pr[ \mathcal{A}(G') \text{ is valid} \mid \pi(\mathcal{A}, G') \text{ is disconnected} ] \leq p'.
    \]
    Next we bound the probability that \( \pi(\mathcal{A}, G') \) is connected. For this, define for \( 1 \leq k \leq q \):
    \[
         \pi_k(G') = \langle i_1(G'), \mathcal{N}_1(G'), i_2(G'), \ldots, i_k(G'), \mathcal{N}_k(G') \rangle.
    \]
    as the trace of \( \mathcal{A} \) on \( G' \) after the first \( k \) queries. As for \( \pi_k(G') \), we say that \( \pi_k(G') \) is connected if there is a path from \( s \) to \( t \) using the discovered edges
    \[
     E(\pi_k(G')) = \bigcup_{j=1}^k \mathcal{N}_j(G')
    \]
    and that it is disconnected otherwise. We further say that \( \pi_k(G') \) is \emph{useless} if it is both disconnected and \( |E(\pi_k(G'))| \leq 2p'n'k \). Since
    \[
        \Pr[ \pi_k(G') \text{ is disconnected} ] \geq \Pr[ \pi_k(G') \text{ is useless} ],
    \]
    we prove that \( \Pr[ \pi_k(G') \text{ is useless} ] \) is large. Therefore, we lower bound
    \[
        \Pr[ \pi_k(G') \text{ is useless} \mid \pi_{k-1}(G') \text{ is useless} ].
    \]
    Note that the base case \( \pi_0(G') \) is defined to be useless as \( s \) and \( t \) are not connected when no queries have been asked and also \( |E(\pi_0(G'))| = 0 \leq 2p'n' \cdot 0 = 0 \). Let \( \tau_{k-1} = (i_1, N_1, \ldots, i_{k-1}, N_{k-1}) \) be any useless trace. The query \( i_k = i_k(G') \) is uniquely determined when conditioning on \( \pi_{k-1}(G') = \tau_{k-1} \), and so is the edge set \( E_{k-1} = E(\pi_{k-1}(G')) \). Furthermore, we know that \( |E_{k-1}| \leq 2p'n'(k-1) \). We now bound the probability that the query discovers more than \( 2p'n' \) new edges. If \( i_k \) has already been queried, no new edges are discovered and the probability is \( 0 \). So assume \( i_k \notin \{ i_1, \ldots, i_{k-1} \} \). Now observe that conditioned on \( \pi_{k-1}(G') = \tau_{k-1} \), the edges \( (i_k, i) \) where \( i \notin \{ i_1, \ldots, i_{k-1} \} \) are independently included in \( G' \) with probability \( p' \) each. The number of new edges discovered is thus a sum of \( m \leq n' \) independent Bernoulli's \( X_1, \ldots, X_m \) with success probability \( p' \). A Chernoff bound implies
    \[
        \Pr\left[ \sum_i X_i > 2n'p' \right] < (e/4)^{n'p'} < e^{-n'p'/3}.
    \]
    Since we assume \( p' \geq  \frac{4 C \log^2 n'}{n'} \), this is at most \(  n'^{-4 C \log n'/3}  \).
    We now bound the probability that the discovered edges \( \mathcal{N}_k(G') \) makes \( s \) and \( t \) connected in \( E(\pi_k(G')) \). For this, let \( V_s \) denote the nodes in the connected component of \( s \) in the subgraph induced by the edges \( E_{k-1} \). Define \( V_t \) similarly. We split the analysis into three cases. First, if \( i_k \in V_s \), then \( \mathcal{N}_k(G') \) connects \( s \) and \( t \) if and only if one of the edges \( \{ i_k, v \} \) with \( v \in V_t \) is in \( G' \). Conditioned on \( \pi_{k-1}(G') = \tau_{k-1} \), each such edge is in \( G' \) independently either with probability \( 0 \), or with probability \( p' \) (depending on whether one of the end points is in \( \{ i_1, \ldots, i_{k-1} \} \)). A union bound implies that \( s \) and \( t \) are connected in \( E(\pi_k(G')) \) with probability at most \( p' |V_t| \). A symmetric argument upper bounds the probability by \( p' |V_s| \) in case \( i_k \in V_t \). Finally, if \( i_k \) is in neither of \( V_s \) and \( V_t \), it must have an edge to both a node in \( V_s \) and in \( V_t \) to connect \( s \) and \( t \). By independence, this happens with probability at most \( p'^2 |V_s| |V_t| \). We thus conclude that
    \[
        \Pr[ \pi_k(G') \text{ is connected} \mid \pi_{k-1}(G') = \tau_{k-1} ] \leq p' \max\{ |V_s|, |V_t| \} \leq p'(|E_{k-1}| + 1) \leq 2p'n'k.
    \]
    By union bound
    \[
        \Pr[ \pi_k(G') \text{ is useless} \mid \pi_{k-1}(G') \text{ is useless} ] \geq 1 - 2p'^2n'k - \frac{1}{n'^{4 C \log n'/3}} .
    \]
    Thus
    \[
        \Pr[ \pi_q(G') \text{ is useless} ] = \prod_{k=1}^q \Pr[ \pi_k(G') \text{ is useless} \mid \pi_{k-1}(G') \text{ is useless} ] 
    \]
    \[
        \geq \prod_{k=1}^q \left( 1 - 2p'^2n'k - \frac{1}{n'^{4 C \log n'/3}} \right)
    \]
    \[
        \geq 1 - \sum_{k=1}^q \left( 2p'^2n'k + \frac{1}{n'^{4 C \log n'/3}} \right)
    \]
    \[
        \geq 1 - p'^2n'q(q+1) - \frac{q}{n'^{4 C \log n'/3}}.
    \]
    It follows
    \[
        \Pr[ \pi(G') \text{ is connected} ] = 1 - \Pr[ \pi(G') \text{ is disconnected} ] \leq 1 - \Pr[ \pi(G') \text{ is useless} ] \leq p'^2 n' (q+1)^2 + \frac{q}{n'^{4C \log n'/3}}.
    \]
    For $q = o\left( \frac{1}{p' \sqrt{n'}} \right)$ and $p' \geq \frac{ 4C\log^2 n' }{ n' }$ node-incident queries in the meta-graph $G'$, the success probability remains $o(1)$. Therefore, $q = \Omega \left( \frac{1}{p \cdot \log^2 n \cdot \sqrt{n}} \right)$ connected component incident retrieval (\Cref{def:ccincidentquery}) queries in given $G$ are necessary.  
\end{proof}

\subsection{Proof of \Cref{RMT}}
\label{Proof:RMT}
\begin{proof}
We consider an Erd\H{o}s-R\'{e}nyi random graph $G \sim G(n,p)$, where $p$ satisfies
\begin{equation}
  p(1-p) \geq C \cdot \frac{\log^4(n)}{n}
  .
  \label{eq:pcond}
\end{equation}
and, $C>0$ is taken to be a sufficiently large constant.

\paragraph{Vertex degrees of the random graph.}
Let $\deg(i)$ denote the degree of vertex $i$ in $G$.
By Bernstein's inequality and a union bound, we have the following with probability at least $1 - 2/n$:
\begin{equation}
  \del{ 1 - \delta_0 } pn \leq \deg(i) \leq \del{ 1 + \delta_0 } pn
  \quad \text{for all vertices $i$ in $G$}
  \label{eq:degbound}
\end{equation}
where
\begin{equation*}
  \delta_0 = \delta_0(p,n) := \frac1n + 2\sqrt{\frac{(1-p)\ln(n)}{pn}} + \frac{2\ln(n)}{3pn} .
\end{equation*}
The assumption in \Cref{eq:pcond} implies that $\delta_0 = O(1/\log^{3/2}(n))$.

\paragraph{Deviation of the random adjacency matrix from its expectation.}

Let the $n \times n$ random matrix $A$ denote the adjacency matrix of $G$, so
\begin{equation*}
  A_{i,j} =
  \begin{cases}
    1 & \text{if $i \neq j$ and $\set{i,j}$ is an edge in $G$} ; \\
    0 & \text{otherwise} .
  \end{cases}
\end{equation*}
Let $X = A - \E(A)$, so we have the following:
\begin{align*}
  \abs{X_{i,j}} & \leq 1
                && \text{for all $1 \leq i \leq j \leq n$} ; \\
  \mathbb{E}(X_{i,j}) & = 0 && \text{for all $1 \leq i \leq j \leq n$} ; \\
  \text{var}(X_{i,j}) & = p(1-p) && \text{for all $1 \leq i < j \leq n$} .
\end{align*}
Then, using the assumption in \Cref{eq:pcond} and the above properties of the random matrix $X$, Theorem~1.4 of \cite{Vu2007SpectralNorm} implies that, there is a constant $C'>0$ such that with probability at least $1 - o(1)$,
\begin{equation*}
    \norm{X}_2 \leq 2 \sqrt{p(1-p)n} + C' (p(1-p)n)^{1/4} \log(n) .
\end{equation*}
Here, the norm on $X$ is the spectral norm (i.e., largest singular value).
For any $\varepsilon>0$, there is a large enough $C$ in \Cref{eq:pcond} such that the above inequality implies
\begin{equation}
    \norm{A -  \mathbb{E}(A)}_2 \leq (2+\varepsilon) \sqrt{p(1-p)n} .
\label{eq:opbound}
\end{equation}

We henceforth condition on the event that both \Cref{eq:degbound} and \Cref{eq:opbound} hold.

\paragraph{Eigenvalues and eigenvectors of the expected adjacency matrix.}

For a symmetric matrix $M$, let $\lambda_k(M)$ denote its $k$-th largest eigenvalue.
The matrix $\E(A)$ can be written as
\begin{equation*}
  \E(A) = pn uu^\T - p I_n ,
\end{equation*}
where $u := n^{-1/2} 1_n$, $1_n := (1,1,\dotsc,1)$ is the all-$1$s vector in $\R^n$, and $I_n$ is the $n \times n$ identity matrix.
Therefore, the largest eigenvalue of $\E(A)$ is $\lambda_1(\E(A)) = p(n-1)$, and $u$ is a corresponding (unit length) eigenvector.
All other eigenvalues of $\E(A)$ are $\lambda_k(\E(A)) = -p$, for $k \neq 1$, and the corresponding eigenvectors $u_\perp$ satisfy $1_n^\T u_\perp = 0$.

\paragraph{Eigenvalues of the random adjacency matrix.}
By Weyl's inequality,
\begin{equation*}
  \abs{\lambda_k(A) - \lambda_k(\E(A))} \leq \norm{A - \E(A)}_2
\end{equation*}
for all $k$.
Therefore, using \Cref{eq:opbound}, we find that
\begin{equation}
  \del{ 1 - \delta_1 } pn \leq \lambda_1(A) \leq \del{ 1 + \delta_1 } pn
  \label{eq:firstbound}
\end{equation}
where
\begin{equation*}
  \delta_1 = \delta_1(p,n) := \frac1n + (2+\varepsilon)\sqrt{\frac{1-p}{pn}} .
\end{equation*}
Furthermore, $\lambda(A) := \max\set{\lambda_2(A), \abs{\lambda_n(A)}}$ satisfies
\begin{equation}
  \lambda(A) \leq (2+\varepsilon+\delta_2) \sqrt{p(1-p)n}
  \label{eq:secondbound}
\end{equation}
where
\begin{equation*}
  \delta_2 = \delta_2(p,n) := \sqrt{\frac{p}{(1-p)n}} .
\end{equation*}
    The assumption in \Cref{eq:pcond} implies that $\delta_1 = O(1/\log^2(n))$ and $\delta_2 = O(1/\log^2(n))$.

\paragraph{Leading eigenvector of the random adjacency matrix.}
Let $v_1$ be any unit length eigenvector corresponding to the largest eigenvalue $\lambda_1(A)$ of $A$.
Recall that $u = n^{-1/2} 1_n$ is a unit length eigenvector corresponding to the largest eigenvalue of $\E(A)$.
We show that $v_1$ (or $-v_1$) is close to $u$ in terms of both the Euclidean norm as well as the $l^\infty$ norm.

The closeness of $v_1$ to $u$ in Euclidean norm follows from the Davis-Kahan $\sin(\Theta)$ theorem, but here we give a direct argument.
We can write
\begin{equation*}
  u = c_1 v_1 + c_2 v_\perp
\end{equation*}
for some unit vector $v_\perp$ orthogonal to $v_1$, and some coefficients $c_1 = u^\T v_1$ and $c_2$ satisfying $c_1^2 + c_2^2 = 1$.
Then
\begin{align*}
  (A - \E(A)) v_1
  & = \del*{ A - pn uu^\T -p I_n } v_1 \\
  & = \lambda_1(A) v_1 - c_1 pn u - p v_1 \\
  & = \lambda_1(A) v_1 - c_1 pn \del*{ c_1 v + c_2 v_\perp } - p v_1 \\
  & = (\lambda_1(A) - p - c_1^2 pn) v_1 - c_1 c_2 pn v_\perp .
\end{align*}
Since $v_1$ and $v_\perp$ are orthogonal, the Pythagorean theorem implies
\begin{equation*}
  \norm{(A - \E(A)) v_1}_2 \geq \abs{ \lambda_1(A) - p - c_1^2 pn } .
\end{equation*}
On the other hand, by \Cref{eq:opbound}, we have
\begin{equation*}
  \norm{(A - \E(A)) v_1}_2 \leq (2+\varepsilon) \sqrt{p(1-p)n} .
\end{equation*}
Therefore,
\begin{equation*}
  \lambda_1(A) - p - c_1^2 pn \leq (2+\varepsilon)\sqrt{p(1-p)n} ,
\end{equation*}
which, together with \Cref{eq:firstbound} and \Cref{eq:pcond}, implies
\begin{equation*}
  (u^\T v_1)^2
  = c_1^2
  \geq \frac{\lambda_1(A)}{pn} - \frac1n - (2+\varepsilon) \sqrt{\frac{1-p}{pn}}
  = 1 - 2\delta_1 .
\end{equation*}
In particular, this implies $\norm{\sign(c_1) v_1 - u}_2 = \sqrt{2(1 - \abs{c_1})} \leq \sqrt{2(1 - \sqrt{1 - 2\delta_1})} = O(\sqrt{\delta_1})$.

Now we show closeness of $v_1$ to $u$ in $l^\infty$ norm.
For any non-negative integer $k$, define the vector
\begin{equation*}
  u^{(k)} = (u_1^{(k)},\dotsc,u_n^{(k)}) := \frac1{\lambda_1(A)^k} A^k u .
\end{equation*}
Also define
\begin{equation*}
  \delta_0' = \delta_0'(p,n) := \frac{1+\delta_0}{1-\delta_1} - 1 .
\end{equation*}
The assumption in \ref{eq:pcond} implies that $\delta_0' = O(\delta_0) = O(1/\log^{3/2}(n))$.
We show, by induction, that for all $k \leq 1 + 1/(2\delta_0')$,
\begin{equation*}
  u_i^{(k)} \in
  \intcc*{
    \frac{1-2k\delta_0'}{\sqrt{n}},
    \frac{1+2k\delta_0'}{\sqrt{n}}
  }
  \quad \text{for all vertices $i$ in $G$}
  .
\end{equation*}
The base case $k = 0$ clearly holds by definition of $u^{(0)} = u$.
So assume the claim holds for $k-1$.
Then, for any vertex $i$,
\begin{align*}
  u_i^{(k)}
  = \frac1{\lambda_1(A)} \sum_{j=1}^n A_{i,j} u_j^{(k-1)}
  & \leq \frac{\deg(i)}{\lambda_1(A)} \cdot \frac{1 + 2(k-1)\delta_0'}{\sqrt{n}}
  \quad \text{(inductive hypothesis)} \\
  & \leq \frac{1+\delta_0}{1-\delta_1} \cdot \frac{1 + 2(k-1)\delta_0'}{\sqrt{n}}
  \quad \text{(by \Cref{eq:degbound,eq:firstbound})} \\
  & = \frac{(1+\delta_0')(1 + 2(k-1)\delta_0')}{\sqrt{n}} \\
  & \leq \frac{1 + 2k\delta_0'}{\sqrt{n}}
  \quad \text{(by the upper-bound on $k$)} .
\end{align*}
Similarly,
\begin{align*}
  u_i^{(k)}
  & \geq \frac{\deg(i)}{\lambda_1(A)} \cdot \frac{1 - 2(k-1)\delta_0'}{\sqrt{n}}
  \quad \text{(inductive hypothesis)} \\
  & \geq \frac{1-\delta_0}{1+\delta_1} \cdot \frac{1 - 2(k-1)\delta_0'}{\sqrt{n}}
  \quad \text{(by \Cref{eq:degbound,eq:firstbound})} \\
  & \geq \frac{(1-\delta_0')(1 - 2(k-1)\delta_0')}{\sqrt{n}}
  \quad \text{(by comparison with $\delta_0'$)} \\
  & \geq \frac{1 - 2k\delta_0'}{\sqrt{n}} .
\end{align*}
Therefore the claim holds for all $k \leq 1 + 1/(2\delta_0')$.

Since $\delta_0' = O(1/\log^{3/2}(n))$, we can choose (with foresight)
\begin{equation*}
  k \asymp \frac{\log(n)}{\log(1/\rho(A))} ,
\end{equation*}
where
\begin{equation*}
  \rho(A) := \frac{\lambda(A)}{\lambda_1(A)} \leq \frac{2+\varepsilon+\delta_2}{1-\delta_1} \sqrt{\frac{1-p}{pn}} = O(\delta_1) .
\end{equation*}
This choice of $k$ satisfies $k \leq 1 + 1/(2\delta_0')$.
Observe that
\begin{align*}
  u^{(k)} = \frac1{\lambda_1(A)^k} A^k u
  & = \frac1{\lambda_1(A)^k} A^k (c_1 v_1 + c_2 v_\perp) \\
  & = c_1 v_1 + c_2 \frac1{\lambda_1(A)^k} A^k v_\perp .
\end{align*}
Therefore
\begin{align*}
  \norm*{ \frac1{c_1} u^{(k)} - v_1 }_\infty
  & = \abs*{\frac{c_2}{c_1\lambda_1(A)^k}} \norm*{A^k v_\perp}_\infty \\
  & \leq \abs*{\frac{c_2}{c_1\lambda_1(A)^k}} \norm*{A^k v_\perp}_2 \\
  & \leq \abs*{\frac{c_2}{c_1}} \del*{\frac{\lambda(A)}{\lambda_1(A)}}^k
  = \abs*{\frac{c_2}{c_1}} \rho(A)^k .
\end{align*}
We also have
\begin{equation*}
  \norm{u^{(k)} - u}_\infty \leq \frac{2k\delta_0'}{\sqrt{n}}
  \quad \text{and} \quad
  \norm*{ \frac1{\abs{c_1}} u - u }_\infty = \del*{ \frac1{\abs{c_1}} - 1 } \frac1{\sqrt{n}} .
\end{equation*}
By the triangle inequality,
\begin{equation*}
  \norm{\sign(c_1) v_1 - u}_\infty
  \leq \del*{ \frac1{\abs{c_1}} - 1 } \frac1{\sqrt{n}} + \frac{2k\delta_0'}{\abs{c_1}\sqrt{n}} + \abs*{\frac{c_2}{c_1}} \rho(A)^k .
\end{equation*}
Now we use the specific choice of $k$ to conclude
\begin{equation}
  \norm{\sign(c_1) v_1 - u}_\infty \leq \frac{\epsilon_0}{\sqrt{n}}
  \label{eq:linfbound}
\end{equation}
where
\begin{equation*}
  \epsilon_0 =
  O\del*{ \frac{\delta_0 \log(n)}{\log(1/\rho(A))} } .
\end{equation*}
The assumption in \Cref{eq:pcond} implies that $\epsilon_0 = O(1/(\sqrt{\log(n)}\log\log(n)))$.

We can now prove the main result, using mostly the same argument as in the proof of Lemma~1 from~\citet{alon2023sublinear}.
Without loss of generality, we assume $\sign(c_1) = 1$ (else we replace $v_1$ with $-v_1$).
Fix any vertex $i$ in $G$, any $\delta \in \intoo{0,1}$, and any distance $d$.
Let $Z$ denote the subset of vertices $j$ such that the $(i,j)$-th entry of $A^d$ is zero.
In other words, there are no length $d$ paths from $i$ to vertices in $Z$.
Let $1_Z$ be the $\set{0,1}$-characteristic vector for $Z$, i.e., the $j$-th component of $1_Z$ is $1$ if and only if $j \in Z$.
We can write
\begin{equation*}
  1_Z = b_1 v_1 + b_2 v_\perp
\end{equation*}
for some unit vector $v_\perp$ orthogonal to $v_1$, and some coefficients $b_1 = 1_Z^\T v_1$ and $b_2$ satisfying $b_1^2 + b_2^2 = \norm{1_Z}_2^2 = \card{Z}$.

Let $e_j$ denote the $j$-th coordinate basis vector.
Note that by \Cref{eq:linfbound}, we have
\begin{equation*}
  e_j^\T v_1 \geq \frac{1-\epsilon_0}{\sqrt{n}}
  \quad \text{and} \quad
  b_1 = 1_Z^\T v_1 = \sum_{j \in Z} e_j^\T v_1 \geq \frac{\card{Z}(1-\epsilon_0)}{\sqrt{n}}
  .
\end{equation*}
Therefore
\begin{align*}
  e_i^\T A^d 1_Z
  & = e_i^\T A^d (b_1 v_1 + b_2 v_\perp) \\
  & = b_1 \lambda_1(A)^d e_i^\T v_1 + b_2 e_i^\T A^d v_\perp \\
  & \geq \lambda_1(A)^d \card{Z} \frac{(1-\epsilon_0)^2}{n} - \abs{b_2} \lambda(A)^d \\
  & \geq \lambda_1(A)^d \card{Z} \frac{(1-\epsilon_0)^2}{n} - \sqrt{\card{Z}} \lambda(A)^d .
\end{align*}
On the other hand, we have $e_i^\T A^d 1_Z = 0$ since the $(i,j)$-th entry of $A^d$ is zero for all $j \in Z$.
Combining with the above inequality, we have
\begin{equation*}
  \lambda_1(A)^d \card{Z} \frac{(1-\epsilon_0)^2}{n} - \sqrt{\card{Z}} \lambda(A)^d \leq 0 ,
\end{equation*}
which rearranges to
\begin{equation*}
  \frac{\card{Z}}{n}
  \leq \frac{n}{(1-\epsilon_0)^4} \rho(A)^{2d}
  .
\end{equation*}
The right-hand side is at most $\delta$ provided that
\begin{equation*}
  d \geq \frac12 \cdot \frac{\log\del*{ \frac{n}{\delta(1-\epsilon_0)^4} }}{\log(1/\rho(A))} .
\end{equation*}
We conclude that there are at most $\delta n$ vertices with distance from $i$ more than
\begin{equation*}
  \frac12 \cdot \frac{\log\del*{ \frac{n}{\delta(1-\epsilon_0)^4(1-\rho(A)^2)} }}{\log(1/\rho(A))} .
\end{equation*}

This implies that for any vertex $i$, for at least $(1-\delta)n$ other vertices $j$, the number of nodes visited by the double BFS algorithm is at most
\begin{align*}
  \del*{ \max_i \deg(i) }^{\ceil*{\frac14 \cdot \frac{\log\del*{ \frac{n}{\delta(1-\epsilon_0)^4(1-\rho(A)^2)} }}{\log(1/\rho(A))}}}
  & \leq \del*{ (1+\delta_0)pn }^{\ceil*{\frac14 \cdot \frac{\log\del*{ \frac{n}{\delta(1-\epsilon_0)^4(1-O(\delta_1^2))} }}{\log(1/\rho(A))}}} \\
  & = O\del*{ n/\delta }^{\frac14 \cdot \frac{\log((1+\delta_0)pn)}{\log(1/\rho(A))}} .
\end{align*}
We can simplify the exponent in the final expression:
\begin{align*}
  \frac14 \cdot \frac{\log((1+\delta_0)pn)}{\log(1/\rho(A))}
  & \leq \frac14 \cdot \frac{\log\del*{\frac{1+\delta_0}{1-\delta_1} \lambda_1(A)}}{\log(1/\rho(A))} \\
  & = \frac14 \cdot \del*{ \frac{\log\del*{\frac{1+\delta_0}{1-\delta_1}}}{\log(1/\rho(A))} + \frac{\log(\lambda_1(A))}{\log(\lambda_1(A)) - \log(\lambda(A))} } \\
  & \leq \frac14 \cdot \del*{ \frac{\frac{\delta_0+\delta_1}{1-\delta_1}}{\log(1/\delta_1)} + \frac1{1 - \frac{\log(\lambda(A))}{\log(\lambda_1(A))}} } \\
  & \leq \frac14 \cdot \del*{ \frac{\frac{\delta_0+\delta_1}{1-\delta_1}}{\log(1/\delta_1)} + \frac1{1 - \frac{\log((2+\varepsilon+\delta_2)\sqrt{pn})}{\log((1-\delta_1)pn)}} } \\
  & = \frac14 \cdot \del*{ \frac{\frac{\delta_0+\delta_1}{1-\delta_1}}{\log(1/\delta_1)} + \frac1{1 - \frac{\log(2+\varepsilon+\delta_2) + \frac12\log(pn)}{\log(1-\delta_1) + \log(pn)}} } \\
  & = \frac14 \cdot \del*{ o(1) + \frac1{1 - \frac{1/2 + o(1)}{1 - o(1)} } } \\
  & = \frac14 \cdot \del*{ 2 + o(1) } = \frac12 + o(1) .
\end{align*}
Therefore the number of nodes visited is
\begin{equation*}
  O\del*{ n/\delta }^{\frac12 + o(1) } .
  \qedhere
\end{equation*}
\end{proof}

\subsection{Proof of \Cref{claim:admissible}}
\begin{proof}
\label{proof:admissiblepair}
    If $\mathrm{comp}_G(s)=\mathrm{comp}_G(t)$, the algorithm returns a simple $s$-$t$ path $\Pi\subseteq E(G)$ upon the first run of generation phase. Note that $E(G)\subseteq E(G^*)$ implies $\Pi\subseteq E(G^*)$.
 
    Otherwise, given that $(G,G^*)$ is an admissible pair, by definition, there exists a connected component $C$ of $G$ such that, for every vertex $u \in V$ with $N_{G^*}(u)\neq\emptyset$,
    \begin{align*}
        \pi_u^{G^*}\big( N_{G^*}(u) \cap V(C) \big) \ge \gamma.
    \end{align*}
    The algorithm repeatedly queries $\mathcal{O}_{G^*}(s)$ and $\mathcal{O}_{G^*}(s)$ until it finds a neighbor of both $s$ and $t$ denoted by $v_s$ and $v_t$ in $C$. Since $C$ is a connected component of $G$, in the generation phase, a BFS in $G$ yields a simple path $\Pi_C\subseteq E(G)$ from $v_s$ to $v_t$. Note that $E(G)\subseteq E(G^*)$ implies $\Pi_C\subseteq E(G^*)$.  Moreover, both of the $(s,v_s)$ and $(v_t,t)$ edges are returned by the retrieval oracle on $G^*$, and therefore, lie in $E(G^*)$. Thus, a path will be found during the generation phase. 
    Note that if the oracle returns $\bot$ either $s$ or $t$, then there can be no $s$-$t$ path in $G^*$, so the algorithm outputs \textsc{NO}.  
   
    It remains to bound the number $Q$ of retrieval calls. For any endpoint $x$ with $N_{G^*}(x)\neq\emptyset$, $\gamma$-admissibility implies that each query to $\mathcal{O}_{G^*}(x)$ hits $C$ with probability at least $\gamma$, independently of past failures. Therefore the expected number of query calls for each end point is bounded by $\gamma$, that is, $\mathbb{E}[Q_x]\le 1/\gamma$. For $Q:=Q_s+Q_t$ by linearity of expectation $\mathbb{E}[Q]\le 2/\gamma$ when a path exists; thus, the pair $(G,G^*)$ is $2/\gamma$-retrieval friendly.   
\end{proof}

\subsection{Proof of \Cref{theorem:admissibleErods}}
\begin{proof}
    Since \(G \sim \ER(n,p\eta)\) and \(np\eta > 1\), the standard Erd\H{o}s--R\'enyi giant-component theorem \cite{Frieze_Karoński_2015} implies that with high probability there is a unique giant \(C\) with
    \(|C|=(\gamma \pm  o(1))n\) with $\gamma=1-e^{-n p \eta \gamma}$, and all other components are \(O(\log n)\). 
    Similarly, with high probability $G^*$ is connected.
    The generation process is equivalent to first sampling $G\sim\ER(n,p\eta)$ and then, to form $G^*$, adding each edge not present in $G$ independently with probability $q=\frac{p-p\eta}{1-p\eta}$. This ensures $G^*$ is a valid $\ER(n,p)$ graph. This allows us to first condition on the realization of $G$ (and thus its giant component $C$) and then analyze the properties of $G^*$.
    Note that in Erd\H{o}s--R\'enyi model for all $u$,  $\pi_u$ is uniform over $N_{G^*}(u)$, and admissibility reduces to 
    \begin{align*}
        \frac{\lvert N_{G^*}(u) \cap V(C) \rvert}{\lvert N_{G^*}(u) \rvert} \ge \gamma.
    \end{align*}
    
    Fix constant $\alpha \in (0, 1/5]$. 
    For any vertex $u \in V$, its degree $|N_{G^*}(u)|$ follows a binomial distribution $\mathrm{Bin}(n-1,p)$.
    Since \(p(n-1)\ge\log n\), Chernoff bounds give, for each fixed \(u\)
    \begin{align*}
        \Pr\big( |N_{G^*}(u)| > (1+\alpha)(n-1)p \big) &\le \exp\left(-\frac{\alpha^2}{3}(n-1)p\right) \\
    \end{align*}
    Similarly, conditioned on $G$, for each vertex not in the giant component the number of its neighbors within the giant component, $|N_{G^*}(u) \cap V(C)|$, follows $\mathrm{Bin}(|C|,p)$
    \begin{align*}
        \Pr\big( |N_{G^*}(u) \cap V(C)| < (1-\alpha)|C|p \mid G \big) &\le \exp\left(-\frac{\alpha^2}{3}|C|p\right)
    \end{align*}
    To ensure these bounds hold for all vertices simultaneously, we apply a union bound. The probability of the first event failing for at least one vertex is at most
    
    \begin{align*}
        \Pr\!\Big(\exists u:\ |N_{G^*}(u)|>(1+\alpha)(n-1)p\Big) &\le n \cdot \exp\left(-\frac{\alpha^2 C_0}{3}\frac{(n-1)\log n}{n}\right) \\
        \Pr\!\Big(\exists u:\ |N_{G^*}(u) \cap V(C)|<(1-\alpha)|C|p \ \Big|\, G\Big) &\le n \cdot \exp\left(-\frac{\alpha^2 C_0}{3}\frac{ |C| \log n}{n}\right) 
    \end{align*}
    
    For sufficiently large $n$, $|C| \ge \frac{n \gamma}{2} $. For $C_0 > \frac{18}{\gamma \alpha^2}$ probabilities above are $o(n^{-2})$, and the lower bound ratio for any vertex $u$:
    $$
    \frac{|N_{G^*}(u) \cap V(C)|}{|N_{G^*}(u)|} \ge \frac{(1-\alpha)|C|p}{(1+\alpha)(n-1)p} = \frac{1-\alpha}{1+\alpha} \cdot \frac{|C|}{n-1} \ge \gamma/3.
    $$
\label{proof:admissibleErdos}
\end{proof}

\subsection{Proof of \Cref{thm:disjoinpaths}}
\label{proof:coupons}
\begin{proof}
    Fix an endpoint $x\in\{s,t\}$. Consider $K$ bins, one per partition $i\in[K]$, such that bin $i$ corresponds to $C_i$, and throw one ball per querying the retriever $\mathcal{O}_{G^*}(x)$.
    A ball occupies bin $i$ if the returned neighbor lies in $V(C_i)$.
    By \Cref{def:layerwise-global}, for every $i$,
    \[
    \Pr[\text{ball occupies bin }i]= \pi_u^{G^*}\big( N_{G^*}(x)\cap V(C_k)\big) \ge \gamma
    \]
    Note that one ball may occupy multiple bins, that is, the same vertex can lie in many $C_i$ which only helps cover faster.
    After $t$ throws, for any fixed $i$ the probability bin $i$ is still empty is at most $(1-\gamma)^t\le e^{-\gamma t}$.
    By a union bound over the $K$ bins,
    \[
    \Pr[\exists\text{ empty bin after }t\text{ balls}]\ \le\ K e^{-\gamma t}.
    \]
    Doing this for both endpoints yields an expected total of at most $O(\frac{\log K}{\gamma})$ queries. Then, for both endpoint $\{s,t\}$ bin $i$ is occupied by anchor $v_s^{(i)},v_t^{(i)}\in V(C_i)$. 
    Since $C_i$ is a connected component of $G_i=(V,E_i)$, a BFS yields a simple path $\Pi_{C_i}\subseteq E(G)$ from $v_s$ to $v_t$. Note that $E(G)\subseteq E(G^*)$ implies $\Pi_{C_i} \subseteq E(G^*)$. Moreover, every $(s,v_s^{(i)})$ and $(v_t^{(i)},t)$ edges are returned by the retrieval oracle on $G^*$, and therefore, lie in $E(G^*)$. Thus, every $(s,v_s^{(i)}) \circ \Pi_{C_i} \circ (v_t^{(i)},t)$  is also in $G^*$.  Moreover, since $\Pi_{C_i}$ uses only edges of $E_i$, and $\{E_i\}_{i=1}^K$ partitions $E$, the internal segments $\{\Pi_{C_i}\}_{i=1}^K$ are pairwise edge disjoint in $G$ and $G^*$.
\end{proof}

\section{Future Directions}
\label{FutureDirections}
\subsection{Empirical Directions}

Another extension is to provide empirical evidence that matches our theory. An easy route is to implement synthetic experiments showing that the model accuracy has an inflection point: low with to few queries and high with enough queries. However, this does not shed light on the impact of parametric knowledge in real LLMs. Also, we already have much evidence that RAG and tool use work well with modern LLMs. 

To validate the necessity of dense parametric knowledge, it would be ideal to train models on multiple mixtures of pre-training corpora, crafted to have different proportions of a target domain. For example, one could train on a mix of general purpose web data and selectively chosen data in a niche domain, like medical or law documents. 

\subsection{Theoretical Directions}
In this work, we focused mainly on finding a path between two vertices $s, t$ and our examples on an  Erd\H{o}s–R\'enyi random graph with $\eta$ retention threshold on the prior.  It is natural to seek query-complexity thresholds for other graph-theoretic tasks under a partially observed prior. The relevant phenomenon is a prior sensitive phase transition, that is, a critical retention level $\eta(P)$ at which a task $P$ switches from requiring $\omega(1)$ queries to allowing $O(1)$ expected queries. In general, there are many sub-linear graph and matrix questions that we can study with prior knowledge. For example, see \cite{beame2020edge, feige2004sums, feige2021tight, racz2019finding, rashtchian2020vector, rashtchian2021average, chen2020nearly} and references therein. Importantly, our work opens up new questions, where we can study how the query complexity changes based on the knowledge $G$ instead of starting with no information about $G^*$.
This includes problems with more global dependencies, such as Minimum Spanning Tree recovery. Note that a natural extension to finding a (shortest) path between two vertices is to consider a set of $M$ input vertices $(s_1,\ldots,s_M)$ and ask whether the learner can efficiently recover a (minimum) spanning tree connecting them all.

What is the tightest possible lower bound on the query complexity for finding a tree spanning input vertices $(s_1,\ldots,s_M)$? This bound should be characterized as a function of the structural properties and densities of both the pretrained graph $G$ and the ground truth graph $G^*$?
What structural properties beyond admissibility of $(G,G^*)$ guarantee a constant upper bound on the expected number of retrieval queries for recovering a tree spanning $(s_1,\ldots,s_M)$? 

Moreover, problems concerning local structure, like triangle detection and counting are interesting to explore. 
Another interesting future direction is the observation model that generates $G$ and $G^*$. In this work we use i.i.d. edge retention, but other realistic mechanisms include radius-dependent thinning in random geometric $k$-NN graphs, which models conserving local edges while suppressing long edges, and adversarial deletions. Each induces a different critical $\eta(P)$ and poses open problems at the interface of random graph theory and query complexity.

\end{document}